\newtheorem{thm}{Theorem}[section]
\newtheorem{lem}[thm]{Lemma}
\newtheorem{cor}[thm]{Corollary}
\newtheorem{defn}{Definition}[section]
\numberwithin{equation}{section}
\newcommand{\Lov}{Lov\'{a}sz\xspace}
\newcommand{\R}{\ensuremath{\mathbb{R}}\xspace}
\newcommand{\mP}{\ensuremath{\mathcal{P}}\xspace}
\newcommand{\mA}{\ensuremath{\mathcal{A}}\xspace}
\newcommand{\mB}{\ensuremath{\mathcal{B}}\xspace}
\newcommand{\mD}{\ensuremath{\mathcal{D}}\xspace}
\newcommand{\mE}{\ensuremath{\mathcal{E}}\xspace}
\newcommand{\mK}{\ensuremath{\mathcal{K}}\xspace}
\newcommand{\mH}{\ensuremath{\mathcal{H}}\xspace}
\newcommand{\mS}{\ensuremath{\mathcal{S}}\xspace}
\newcommand{\mPosDef}{\ensuremath{\mS_{++}^n}\xspace}
\newcommand{\mX}{\ensuremath{\mathcal{X}}\xspace}
\newcommand{\E}{\ensuremath{\mathbb{E}}\xspace}
\renewcommand{\P}{\ensuremath{\mathbb{P}}\xspace}
\newcommand{\set}[1]{\ensuremath{\left\{#1\right\}}\xspace}
\newcommand{\st}{\ensuremath{\;\big|\;}\xspace}
\newcommand{\Otilde}{\ensuremath{\widetilde{O}}\xspace}
\newcommand{\tr}{\mathrm{tr}}
\newcommand{\ip}[2]{\left \langle #1, \, #2 \right \rangle}
\newcommand{\ta}{\ensuremath{\tilde{a}}\xspace}
\title{John's Walk}
\author[1]{Adam Gustafson \thanks{adam.marc.gustafson@gmail.com}}
\author[2]{Hariharan Narayanan \thanks{hariharan.narayanan@tifr.res.in}}
\affil[1]{Microsoft Corporation}
\affil[2]{ School of Technology and Computer Science, Tata Institute of Fundamental Research}
\date{\today}
\begin{document}
\maketitle

\begin{abstract}
We present an affine-invariant random walk for drawing uniform random samples from a convex body $\mK \subset \R^n$ that uses maximum volume inscribed ellipsoids, known as John's ellipsoids, for the proposal distribution.  
Our algorithm makes steps using uniform sampling from the John's ellipsoid of the symmetrization of $\mK$ at the current point. We show that from a warm start, the random walk mixes in $\Otilde(n^7)$ steps\footnote{$\Otilde(\cdot)$ notation suppresses polylogarithmic factors as well as constants depending only on the error parameters.} where the log factors depend only on constants associated with the warm start and desired total variation distance to uniformity.  We also prove polynomial mixing bounds starting from any fixed point $x$ such that for any chord $pq$ of $\mK$ containing $x$, $\left|\log \frac{|p-x|}{|q-x|}\right|$ is bounded above by a polynomial in $n$.
\end{abstract}

\section{Introduction}
Drawing random samples from a convex body in $\mK \subset \R^n$ is an important problem for volume computation and optimization which has generated a large body of research.  Usually $\mK$ is specified by a membership oracle which certifies whether or not a test point $x \in \R^n$ is contained in $\mK$.  Given such an oracle, geometric random walks are then used to explore $\mK$ such that after a sufficient number of steps, the walk has ``mixed'' in the sense that the current point is suitably close to a point uniformly drawn from $\mK$ in terms of statistical distance.   To use such walks, an assumption that $\mB(r) \subset \mK \subset \mB(R)$ is often made, where $\mB(r)$ represents the Euclidean ball of radius $r > 0$.  One common example of such geometric walks is the Ball Walk, which generates the next point by uniformly randomly sampling from a ball of radius $\delta \leq r/\sqrt{n}$ centered at the current point, and mixes in
$\Otilde(n(R^2/\delta^2))$ steps from a warm start (i.e., the starting distribution has a density bounded above by a constant) \cite{kannan1997random}.    Another is Hit and Run, where the next point is chosen uniformly at random from a random chord in $\mK$ which intersects the current point.  Hit and Run mixes in $O(n^3(R/r)^2 \log(R/(d\epsilon)))$ where the starting point is a distance $d$ from the boundary and $\epsilon$ is the desired distance to stationarity \cite{lovasz2006hit}. 
Affine-invariant walks (i.e., geometric walks whose mixing time is invariant to such affine transformations) are another class of random walks which avoid the problem of rounding.  One such random walk is known as Dikin Walk \cite{kannan2012random}, which uses uniform sampling from Dikin ellipsoids to make steps.  Given a polytope with $m$ inequality constraints, the Dikin Walk mixes in $\Otilde(mn)$ steps from a warm start.  This random walk was extended to general convex bodies equipped with a $\nu$-self-concordant barrier in \cite{narayanan2016randomized}, and mixes in $\Otilde(n^3\nu^2)$ steps from a warm start.  For the case of a polytope, this implies that the Dikin walk equipped with the Lee-Sidford (LS) barrier \cite{lee2013path} mixes in $\Otilde(n^5)$ steps from a warm start, though at each step one must additionally compute the LS barrier which requires $O(nnz(A) + n^2)$ arithmetic operations, where $nnz(A)$ is the number of non-zeros in the matrix $A$ which defines the polytope.  A significantly improved analysis of this walk was performed by \cite{chen2017fast}, and their algorithm reaches a total variation distance of $\epsilon$ from the uniform measure in $O\left(n^{2.5}\log^4\left(\frac{2m}{n}\right)\log\left(\frac{M}{\epsilon}\right)\right)$
steps from an $M$-warm start. 
Very recently, it was shown in \cite{Laddha} that for certain ``strongly self-concordant" barriers
there is a Dikin walk that mixes in $\tilde{O}(n \tilde{\nu})$ where $\tilde{\nu}$ is related to the self-concordence parameter of the barrier.

This paper introduces another affine-invariant random walk akin to Dikin Walk which uses uniform sampling from John's ellipsoids of a certain small radius of appropriately symmetrized convex sets to make steps, and show that this walk mixes to within a total variation distance $\epsilon$ in $O(n^7\log \epsilon^{-1})$ steps from a warm start.   The type of convex body $\mK$ is not specified (i.e., need not be a polytope) in our analysis of the mixing time, but one must have access to the John's ellipsoid of the current symmetrization of the convex body.  While this dependence on the dimension is admittedly steep, a significant  feature of this walk is that its  mixing time from a warm start or alternatively ``central point" such as the center of mass, can be bounded above by a quantity that has absolutely no dependence on any parameter associated with the body apart from its dimension. 

\noindent{\bf Notation:}
We will denote a large universal constant by $C$ and a small universal constant by $c$.

\noindent Our main theorems at the end of this paper are the following.

\noindent{\bf Theorem 1:}
Let  $\epsilon > 0$ and $M =\sup \frac{\pi_0(A)}{\pi(A)}$, after $t(\epsilon) =  C n^7 \log({M}/\epsilon) $ steps of John's walk, we have $d_{TV}(\pi_{t(\epsilon)}, \pi) \leq \epsilon$.  

\noindent{\bf Theorem 2:}
For all chords $pq$ of $\mK$ containing $x$, assume $\frac{|p-x|}{|q-x|} \in (\eta, \eta^{-1})$ for some parameter $0 < \eta < 1$ that measures the centrality of $x$ in $\mK$. Then, there is a random geometrically distributed time $\tau$ with mean bounded above by $C$ such that 
for $\epsilon > 0$, after $t(\epsilon) + \tau = C n^7 \left(n \log(\sqrt{n}/(r\eta)) + \log({1}/\epsilon)\right) + \tau$ steps of John's walk starting at $x$, we have $d_{TV}(\pi_{t(\epsilon)+ \tau}, \pi) \leq \epsilon$.  \\\\

It is known that for the center of mass, $\eta \geq \frac{c}{n}.$

\section{John's Walk}\label{sec:john_main}
In this section, we describe John's maximum volume ellipsoid for a convex body $\mK \subset \R^n$, and describe a geometric random walk using such ellipsoids.  We begin with reviewing John's theorem and some implications of the theorem.

\subsection{John's Theorem}
Fritz John showed that any convex body contains a unique ellipsoid of maximal volume, and characterized the ellipsoid \cite{john48, ball1992ellipsoids}.  Without loss of generality, we may assume that the ellipsoid of maximal volume is the unit Euclidean ball $\mB \subset \R^n$, since this is a case after an affine transformation.  John's theorem as stated for the unit ball case is as follows:
\begin{thm}[John's Theorem]\label{thm:john}
Each convex body $\mK \subset \R^n$ contains a unique ellipsoid of maximal volume.  The ellipsoid is $\mB$ if and only if the following conditions are satisfied: $\mB \subset \mK$, and for some $m \geq n$ there are Euclidean unit vectors $\set{u_i}_{i=1}^m$ on the boundary of $\mK$ and positive constants $\set{c_i}_{i=1}^m$ satisfying,
\begin{align}
&\sum_{i=1}^m c_i u_i = 0, \label{eq:john_cond1} \\  
&\sum_{i=1}^m c_i u_i u_i^T = I_n, \label{eq:john_cond2} 
\end{align}
where $I_n$ denotes the identity matrix in $\R^{n\times n}$.  
\end{thm}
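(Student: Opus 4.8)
The standard approach has two parts: (1) existence and uniqueness of the maximum volume inscribed ellipsoid, and (2) the characterization via the conditions \eqref{eq:john_cond1}–\eqref{eq:john_cond2} being the first-order optimality conditions of the underlying optimization problem. Let me sketch both.

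For existence and uniqueness: First I would set up the optimization. An ellipsoid in $\R^n$ can be written as $E = \{Tx + b : \|x\| \le 1\}$ for $T \in \mS_{++}^n$ (WLOG symmetric positive definite, since only $TT^\T$ matters) and $b \in \R^n$, with volume proportional to $\det T$. The constraint $E \subseteq \mK$ defines a closed set in the $(T,b)$-parameter space; using $\mB(r) \subseteq \mK \subseteq \mB(R)$ (which holds after normalization for any convex body) one shows the feasible region for which $\det T \ge \det(rI) > 0$ is compact, so a maximizer exists by continuity of $\det$. For uniqueness, the key fact is that $\log \det$ is strictly concave on $\mS_{++}^n$, but one must be careful because the "average" of two ellipsoids is not simply parametrized by the average of the $T$'s. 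The clean argument: if $E_1 = T_1\mB + b_1$ and $E_2 = T_2 \mB + b_2$ are both optimal, consider the ellipsoid $E$ with shape matrix $M = \tfrac12(M_1 + M_2)$ where $M_i = T_i T_i^\T$, centered at $\tfrac12(b_1+b_2)$; one shows $E \subseteq \tfrac12 E_1 + \tfrac12 E_2 \subseteq \mK$ (Minkowski average, using convexity of $\mK$) and that $\det M \ge \det M_1$ with equality iff $M_1 = M_2$, by concavity of $\log\det$ together with a containment/volume comparison. This forces $E_1 = E_2$.

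For the characterization: Assume $\mB$ is the maximum volume inscribed ellipsoid. I would write the problem of maximizing $\log\det T$ (or a related smooth objective in the shape matrix) over $\{(T,b)\}$ subject to the infinitely many linear constraints "$Tu + b \in \mK$ for all $\|u\|\le1$", or equivalently subject to the support-function constraints indexed by the boundary normals of $\mK$. The contact points are exactly the unit vectors $u_i$ where $\partial \mB$ touches $\partial \mK$. At these points the outward normal to $\mK$ coincides with $u_i$ (since $\mB$ is inscribed and the sphere is smooth). The first-order (KKT) conditions at the optimum $T = I$, $b=0$ then say: the gradient of $\log\det$ in the $T$-direction, namely $I_n$ (after identifying $\nabla_T \log\det T|_{T=I}$), lies in the cone generated by the "constraint gradients" $u_i u_i^\T$, giving $\sum c_i u_i u_i^\T = I_n$ with $c_i > 0$; and the gradient in the $b$-direction, namely $0$, lies in the cone generated by the $u_i$, giving $\sum c_i u_i = 0$. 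One needs a compactness/Carath\'eodory-type argument to reduce to finitely many ($m \le n(n+3)/2$, though the statement only claims $m \ge n$) contact points, and a Lagrangian-duality or separating-hyperplane argument to pass from "$I$ is optimal" to "$I$ is in the conical hull." Conversely, if the conditions hold: taking traces of \eqref{eq:john_cond2} gives $\sum c_i = n$; then for any competing ellipsoid $E = T\mB + b \subseteq \mK$ one has $\langle u_i, Tu_i + b\rangle \le 1$ for each $i$ (since $u_i$ is a supporting normal), and summing $c_i$ times this inequality, using \eqref{eq:john_cond1}, \eqref{eq:john_cond2}, yields $\tr(T) \le n$; combined with AM-GM, $\det T \le (\tr(T)/n)^n \le 1$, with equality iff $T = I$, so $\mB$ is the unique maximizer.

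**The main obstacle** I expect is the forward direction of the characterization: rigorously extracting the finitely-supported optimality conditions. The difficulty is that the constraint set is described by infinitely many inequalities (one per boundary point of $\mK$), so one cannot directly invoke finite-dimensional KKT. The right tool is a separation argument: suppose for contradiction that $I_n$ is \emph{not} in the convex cone generated by $\{u_i u_i^\T : u_i \text{ a contact point}\}$ within the space of symmetric matrices (together with the $b$-direction condition); then there is a symmetric matrix $H$ and vector $h$ separating $I_n$ (resp.\ $0$) from that cone, and one uses $H, h$ to build an explicit perturbation $T_\epsilon = I + \epsilon H$, $b_\epsilon = \epsilon h$ that stays inside $\mK$ for small $\epsilon > 0$ (here one must verify feasibility \emph{uniformly} over $\partial\mK$, using compactness of the contact set and that non-contact points of $\partial\mB$ are strictly interior to $\mK$) while strictly increasing $\det T$ — contradicting maximality. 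Making the uniform feasibility estimate precise, and handling the interaction between the shape perturbation and the translation perturbation, is the technical heart of the proof. I would also note that one may instead cite this as classical \cite{john48, ball1992ellipsoids}; a self-contained proof would follow the outline above.
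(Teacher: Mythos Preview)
The paper does not prove Theorem~\ref{thm:john}; it is stated as a classical result and attributed to \cite{john48, ball1992ellipsoids}, with no proof given in the paper itself. Your sketch is therefore not being compared against anything in the paper, and your closing remark---that one may simply cite the result---is exactly what the authors do.

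That said, your outline is essentially the standard argument (close to Ball's exposition): compactness for existence, strict concavity of $\log\det$ for uniqueness, and a separation/KKT argument for the forward direction of the characterization, with the trace-plus-AM-GM computation for the converse. Two small comments. First, in your converse direction you wrote $\langle u_i, Tu_i + b\rangle \le 1$; the actual constraint from $T\mB + b \subseteq \mK$ is the stronger $|Tu_i| + \langle u_i, b\rangle \le 1$, but since $\langle u_i, Tu_i\rangle \le |Tu_i|$ your displayed inequality follows and the trace bound goes through. Note, however, that this computation only yields $\det T \le 1$ and does not by itself pin down the center $b$; you correctly rely on the separately-proved uniqueness to conclude. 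Second, in the uniqueness sketch the containment of the ellipsoid with averaged shape matrix inside the Minkowski average $\tfrac12 E_1 + \tfrac12 E_2$ is true but not entirely obvious, and deserves a line of justification (or one can argue uniqueness differently, e.g.\ via strict concavity of $(\det M)^{1/n}$ along the segment of shape matrices together with the observation that feasibility is convex in $(M,b)$).
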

\noindent  Note that condition \eqref{eq:john_cond2} is sometimes written equivalently as
$$
\ip{x}{y} = \sum_{i=1}^m c_i \ip{u_i}{x} \ip{u_i}{y}
$$
for all $x, y \in \R^n$.  Using the cyclic invariance of the trace and that the $\set{u_i}$ are unit vectors, condition \eqref{eq:john_cond2} implies that
\begin{equation}\label{eq:john_csum}
\sum_{i=1}^n c_i = n,
\end{equation}
a property we employ in subsequent analysis.
We now enumerate some properties from \cite{ball1992ellipsoids} which provide additional insight into the geometric properties of John's ellipsoids and are useful for the analysis in subsequent sections. Note that condition \eqref{eq:john_cond1} implies that all the contact points do not lie in one in one half-space of the unit ball, and this condition is redundant in the symmetric case, since for every contact point $u_i$, its reflection about the origin $-u_i$ is also a contact point.  Condition \eqref{eq:john_cond2} guarantees such contact points do not lie close to a proper subspace. Furthermore, there are at most $n(n+3)/2$ contact points for general $\mK$, and $n(n+1)/2$ non-redundant contact points if $\mK$ is origin-symmetric \cite{gruber1988minimal}.  At each $u_i$, the supporting hyperplane to $\mK$ is unique and orthogonal to $u_i$, since this is the case for the unit ball.  Thus considering the polytope resulting from such supporting hyperplanes, $\mP = \set{x \in \R^n \st \langle x, u_i \rangle \leq 1, \, i = 1, \ldots, m}$, the convex set $\mK$ obeys the sandwiching $\mB \subset \mK \subset \mP$.  By Cauchy-Schwarz, for any $x \in \mP$, we have
$$
-|x| \leq \ip{u_i}{x} \leq 1.
$$
Since the weights $\set{c_i}$ are positive, it follows by employing conditions \eqref{eq:john_cond1}, \eqref{eq:john_cond2}, and \eqref{eq:john_csum} that
$$
\begin{aligned}
0 &\leq \sum_i c_i (1 - \ip{u_i}{x})(|x| + \ip{u_i}{x}) \\
&= |x|\sum_i c_i + (1 - |x|) \ip{\sum_i c_i u_i}{x} - \sum_i c_i \ip{u_i}{x}^2 \\
&= n |x| - |x|^2,
\end{aligned}
$$
from which it follows that $|x| \leq n$. 
 If the convex body is origin-symmetric, then by substituting $-u_i$ for $u_i$, for any $x \in \mP$, we have
$$
|\ip{u_i}{x}| \leq 1.
$$
It follows that 
$$
|x|^2 = \sum_{i=1}^m c_i \ip{u_i}{x}^2 \leq \sum_{i=1}^m c_i = n,
$$
so $|x| \leq \sqrt{n}$.  
It is known that if $\mK$ is origin-symmetric and the unit ball is the John's ellipsoid, then the containment is
\begin{equation}\label{eq:john_nesting2}
\mB \subset A(\mK) \subset \sqrt{n} \mB.
\end{equation}
\subsection{The John's Walk Algorithm}
We state the algorithm for a general convex body $\mK$.  At a given point $x \in \mK$, let the symmetrization of $x \in \mK$ be  
$$
\mK_x^s \equiv \mK \cap \set{2x - y \st y \in \mK},
$$ 
and let $\mE_x = \set{E_x u + x \st |u| \leq 1}$ denote the John's ellipsoid of $\mK_x^s$.  Similarly, let the rescaled John's ellipsoid be $\mE_x(r) = \set{r (E_x u) + x \st |u| \leq 1}$, where the radius $r > 0$ will be specified in section \ref{sec:mixing}.  Assume $0 = x_0 \in \text{int}(\mK)$, and we have computed $\mE_{x_0}$.  To generate a sample $x_i$ given $x_{i-1}$, we use algorithm \ref{alg:johns_walk}, where $\lambda(\cdot)$ denotes the Lebesgue measure on $\R^n$:

\begin{figure}[H]
\renewcommand{\figurename}{Algorithm}
\noindent
\begin{minipage}{\textwidth}
\noindent\rule{\linewidth}{.4 pt}
\caption{John's Walk Step}
\label{alg:johns_walk}
\noindent\rule{\linewidth}{.4 pt}
Given $x \in \mK_x^s$, $r > 0$, and $\mE_x$, generate the next step $y$ as follows:
\begin{enumerate}
\item Toss a fair coin.  If the result is heads, let $y = x$.  
\item If the result is tails:
\begin{enumerate}
\item Draw a uniformly distributed random point $z$ from $\mE_x(r)$.
\item Compute $\mE_z$ using $\mK_z^s$.
\item If $x \notin \mE_z(r)$, let $y = x$. Otherwise, let
$$
y = \begin{cases}
z, & \text{with probability } \min\left(1, \,\frac{\lambda(\mE_x(r))}{\lambda(\mE_z(r))}\right) = \min\left(1,\, \frac{\det E_x}{\det E_z}\right), \\
x, & \text{else}.  
\end{cases}
$$
\end{enumerate}
\end{enumerate}
\noindent\rule{\linewidth}{.4 pt}
\end{minipage}
\renewcommand{\figurename}{Figure}
\end{figure} \noindent 
Algorithm \ref{alg:johns_walk} is a Metropolis-Hastings geometric random walk which uses the uniform measure $Q_x(\cdot)$ on the dilated John's ellipsoid $\mE_x(r)$ as the proposal distribution. Tossing a fair coin ensures the transition probability kernel defined by the algorithm is positive definite, which is known as making the walk lazy. Lazy random walks have the same stationary distribution as the original walk at the cost of a constant increase in mixing time (we will analyze the non-lazy walk, noting that the mixing time is not affected in terms of complexity as a function of $m$ and $n$).  The rejection of any sample $y$ such that $x \notin \mE_y(r)$ is necessary to ensure the random walk is reversible.  

The uniform measure on the John's ellipsoid $\mE_x(r)$ is absolutely continuous with respect to the Lebesgue measure $\lambda$, and thus the Radon-Nikodym derivative (i.e., density) for the proposal distribution is 
\begin{equation}\label{eq:proposal_density}
\begin{aligned}
q_x(y) &\equiv \frac{dQ_x}{d\lambda}(y) \\
&= \left(\frac{1}{\lambda(\mE_x(r))}\right) \cdot 1_{\set{y \in \mE_x(r)}}.
\end{aligned}
\end{equation}
The acceptance probability corresponding to the uniform stationary measure in the Metropolis filter is 
$$
\alpha_x(y) = \min\left[1, \; \frac{\lambda(\mE_x(r))}{\lambda(\mE_y(r))}\right].
$$
By the Lebesgue decomposition, the transition probability measure $P_x(\cdot)$ of the non-lazy version of algorithm \ref{alg:johns_walk} is  absolutely continuous with respect to the measure 
\begin{equation} \label{eq:mu}
\mu \equiv \lambda + \delta_x,
\end{equation} 
where $\delta_x(\cdot)$ is the Dirac measure at $x$ corresponding to a rejected move.  The transition density is thus
\begin{equation} \label{eq:transition_density}
\begin{aligned}
p_x(y) &\equiv \frac{dP_x}{d\mu}(y) \\
&= \alpha_x(y) q_x(y) 1_{\set{y \neq x, \, x \in \mE_y(r)}} + \rho(x) 1_{\set{y = x}} \\
&= \min \left[\frac{1}{\lambda(\mE_x(r))}, \, \frac{1}{\lambda(\mE_y(r))} \right]1_{\set{y \neq x, \, x \in \mE_y(r), \, y \in \mE_x(r)}} + \rho(x) 1_{\set{y = x}},
\end{aligned}
\end{equation}
where $1_{\set{\cdot}}$ is the indicator function and the rejection probability is denoted $\rho(x)$.  We next analyze the mixing time of the walk.

\section{Analysis of Mixing Time}\label{sec:mixing}
In what follows we let a discrete-time, homogeneous Markov chain be the triple $\set{\mK, \mA, P_x(\cdot)}$ along with a distribution $P_0$ for the starting point, where the sample space is the convex body $\mK \subset \R^n$, the measurable sets on $\mK$ are denoted by $\mA$, and $P_x(\cdot)$ denotes the transition measure for any $x \in \mK$.   

\subsection{Conductance and Mixing Times}
We use the approach from \cite{lovasz1993random} of lower-bounding the conductance of the chain to prove mixing times.  The conductance is defined as follows.
\begin{defn}[Conductance]
Let $P$ be a discrete-time homogenous Markov chain with kernel $P_x(\cdot)$ that is reversible with respect to the stationary measure $\pi(\cdot)$.  Given $A \in \mA$ with $0 < \pi(A) < 1$, the conductance of $A$ is defined as
$$
\phi(A) \equiv \frac{\Phi(A)}{\min\set{\pi(A), \pi(\mK\setminus A)}},  
$$
where $\Phi(A) \equiv \int_A P_u(\mK \setminus A) \, d\pi(u)$.  The conductance of the chain is defined as
$$
\phi \equiv \inf \set{\phi(A) \st A \in \mA, \, 0 < \pi(A) \leq 1/2}.
$$
\end{defn}

Recall the total variation distance between two measures $P_1, P_2$ on a measurable space $(\mK, \mA)$ is 
$$
d_{TV}(P_1, P_2) = \sup_{A \in \mA} |P_1(A) - P_2(A)|.
$$
Note that $|P_1(A) - P_2(A)| = |P_1(\mK\setminus A) - P_2(\mK\setminus A)|$, so if the supremum is attained on any $A \in \mA$, then it is attained on $\mK \setminus A \in \mA$ as well. 
If $P_1$ and $P_2$ are both absolutely continuous with respect to a dominating measure $\mu$ and thus have densities $p_1 \equiv \frac{dP_1}{d\mu}$ and $p_2 \equiv \frac{dP_2}{d\mu}$, respectively, the total variation distance may also be written as
\begin{equation}\label{eq:tv_density}
\begin{aligned}
d_{TV}(P_1, P_2) &= \frac{1}{2}\int \big|p_1 - p_2\big| d\mu \\
&= 1 - \int \min (p_1, p_2)\, d\mu \\
&= 1 - \int_{S_1} \left[\min \left(1, \frac{p_2}{p_1}\right)\right] p_1 \, d\mu \\
&=  1 - \E_{P_1} \left[\min\left(1, \, \frac{p_2}{p_1}\right)\right],
\end{aligned}
\end{equation}  
where $S_1 = \set{x \st p_1(x) > 0}$.  Recall that \eqref{eq:tv_density} does not depend on the choice of dominating measure $\mu$ but rather that the densities are correctly specified with respect to the dominating measure.  Additionally, note that the equality is attained on $\set{x \st p_1(x) \geq p_2(x)}$ almost everywhere with respect to $\mu$ (or alternatively on its complement).  The following relationship between conductance and the total variation distance to the stationary measure was proven in \cite{lovasz1993random}.
\begin{thm}[\Lov and Simonovits]
Let $\pi_0$ be the initial distribution for a lazy, reversible Markov chain with conductance $\phi$ and stationary measure $\pi$, and let $\pi_t$ denote the distribution after $t$ steps.  Let $\pi_0$ be an $M$-warm start for $\pi$, i.e., we have $M = \sup_{A \in \mA} \frac{\pi_0(A)}{\pi(A)}$.  Then
\begin{equation}\label{eq:tv_convergence}
d_{TV}(\pi_t, \pi) \leq \sqrt{M}\left(1 - \frac{\phi^2}{2}\right)^t.
\end{equation}
\end{thm}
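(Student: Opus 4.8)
The plan is to run the level-set ("Lov\'asz--Simonovits curve") argument. First I would write $\pi_t = g_t\,\pi$ with $g_t = d\pi_t/d\pi$ (the $M$-warm hypothesis forces $\pi_0 \ll \pi$, hence $\pi_t \ll \pi$ for every $t$) and, for $x\in[0,1]$, set
$$
h_t(x) \equiv \sup\set{\pi_t(A) \st A \in \mA,\; \pi(A) = x}.
$$
Since $\pi$ (the normalized Lebesgue measure on $\mK$) is atomless, the bathtub principle identifies the maximizer with a super-level set $\set{g_t \geq \tau}$ of $g_t$ and yields $h_t(x) = \int_0^x g_t^{\ast}$ for the nonincreasing rearrangement $g_t^{\ast}$ of $g_t$; hence $h_t$ is concave on $[0,1]$ with $h_t(0)=0$, $h_t(1)=1$, and (concave, pinned at the endpoints) $h_t(x)\geq x$. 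The first reduction is to read the distance off this curve: from \eqref{eq:tv_density} and the complementation remark preceding it,
$$
d_{TV}(\pi_t,\pi) \;=\; \sup_{A\in\mA}\big(\pi_t(A)-\pi(A)\big) \;=\; \sup_{x\in[0,1]}\big(h_t(x)-x\big),
$$
so it suffices to bound $h_t(x)-x$.

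Next I would fix the two ingredients of an induction on $t$. For the base case, $\pi_0(A)\le M\pi(A)$ gives $h_0(x)\le\min(Mx,1)$, and an elementary estimate (using $M\ge1$) shows $\min(Mx,1)\le x+\sqrt{M\min(x,1-x)}$ for all $x$; writing $K_t \equiv \sqrt{M}\,(1-\phi^2/2)^t$, this is $h_0(x)\le x + K_0\sqrt{\min(x,1-x)}$. For the inductive step I would invoke the one-step smoothing inequality of \cite{lovasz1993random}: for a lazy, reversible chain of conductance $\phi$ and every $x\in[0,1]$,
$$
h_{t+1}(x) \;\leq\; \tfrac12\,h_t\!\big(x - 2\phi\,t_x\big) \;+\; \tfrac12\,h_t\!\big(x + 2\phi\,t_x\big),\qquad t_x\equiv\min(x,\,1-x).
$$
I would prove this by taking the extremal (super-level) set $A$ with $\pi(A)=x$, decomposing $P_u = \tfrac12\delta_u + \tfrac12\widehat P_u$ into its lazy part and an auxiliary $\pi$-reversible kernel $\widehat P_u$ (so $\pi_{t+1}(A) = \tfrac12\pi_t(A) + \tfrac12\!\int\widehat P_u(A)\,d\pi_t$), and then balancing the outflow $\int_A\widehat P_u(A^c)\,d\pi_t$ against the inflow $\int_{A^c}\widehat P_u(A)\,d\pi_t$: reversibility of $\widehat P$ equates these when integrated against $\pi$; the super-level structure ($g_t\ge\tau$ on $A$, $g_t\le\tau$ on $A^c$) lets one compare the $\pi_t$-integrals to the $\pi$-integrals; and the conductance bound $\int_A\widehat P_u(A^c)\,d\pi = 2\Phi(A)\geq 2\phi\,t_x$ turns the net flow into the displacement $2\phi t_x$ appearing in the smoothed curve. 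Laziness is what makes $\widehat P$ a bona fide kernel and (forcing $\phi\le\tfrac12$) keeps $x\pm2\phi t_x$ inside $[0,1]$. I expect this inequality to be the main obstacle: converting "the net flow across every level set is at least $2\phi t_x$" into a clean pointwise inequality for the concave function $h_t$ needs careful bookkeeping with the extremal-set structure, and it is the only place the chain hypotheses (reversibility, laziness, conductance) enter.

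Finally I would close the induction. Assuming $h_t(x)\le x+K_t\sqrt{\min(x,1-x)}$ for all $x$, substitute into the smoothing inequality with $x\le1/2$ and $y\equiv2\phi x$; using $\sqrt{\min(u,1-u)}\le\sqrt{u}$ for $u\le1/2$ and $\sqrt{\min(u,1-u)}=\sqrt{1-u}\le\sqrt{u}$ for $u\ge1/2$ gives
$$
h_{t+1}(x) \;\leq\; x + \tfrac{K_t}{2}\big(\sqrt{x-y}+\sqrt{x+y}\big).
$$
Since $\big(\sqrt{x-y}+\sqrt{x+y}\big)^2 = 2x\big(1+\sqrt{1-4\phi^2}\big) \leq 2x\big(1+(1-2\phi^2)\big) = 4x(1-\phi^2)$, we get $\sqrt{x-y}+\sqrt{x+y}\le 2\sqrt{x}\,\sqrt{1-\phi^2}\le 2\sqrt{x}\,(1-\phi^2/2)$, hence $h_{t+1}(x)\le x + K_t(1-\phi^2/2)\sqrt{x} = x + K_{t+1}\sqrt{\min(x,1-x)}$; the case $x>1/2$ is the mirror image with displacement $2\phi(1-x)$. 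By induction the bound holds for every $t$, so
$$
d_{TV}(\pi_t,\pi) \;=\; \sup_x\big(h_t(x)-x\big) \;\leq\; K_t\,\sup_x\sqrt{\min(x,1-x)} \;=\; \frac{\sqrt{M}}{\sqrt{2}}\,\Big(1-\frac{\phi^2}{2}\Big)^{t} \;\leq\; \sqrt{M}\,\Big(1-\frac{\phi^2}{2}\Big)^{t},
$$
which is \eqref{eq:tv_convergence}. (Alternatively, one could cite this theorem of \cite{lovasz1993random} verbatim and skip the reconstruction.)
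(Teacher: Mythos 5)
The paper does not prove this theorem at all: it is imported verbatim from \cite{lovasz1993random} (the text says the relationship ``was proven in'' that reference), so there is no in-paper argument to compare against. Your reconstruction is the standard Lov\'asz--Simonovits curve argument and is essentially correct: the reduction $d_{TV}(\pi_t,\pi)=\sup_x(h_t(x)-x)$, the base case $h_0(x)\le\min(Mx,1)\le x+\sqrt{M\min(x,1-x)}$, and the induction closing via $\bigl(\sqrt{x-y}+\sqrt{x+y}\bigr)^2=2x\bigl(1+\sqrt{1-4\phi^2}\bigr)\le 4x(1-\phi^2)$ all check out (you even get the slightly stronger constant $\sqrt{M/2}$). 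The one place that remains a sketch --- and you flag it yourself --- is the one-step smoothing inequality. Your ``outflow versus inflow'' description is the right idea but the clean way to nail it down is the usual splitting: with $f(u)=P_u(A)$ for the extremal set $A$, laziness gives $f\ge 1/2$ on $A$ and $f\le 1/2$ on $A^c$, so $f=\tfrac12(f_1+f_2)$ with $f_1=1_A+2f\,1_{A^c}$ and $f_2=(2f-1)1_A$ both valued in $[0,1]$; reversibility gives $\int f_1\,d\pi=x+2\Phi(A)$ and $\int f_2\,d\pi=x-2\Phi(A)$, and the fractional-set form of $h_t$ (or concavity plus the super-level structure) gives $\int f_i\,d\pi_t\le h_t\bigl(\int f_i\,d\pi\bigr)$, after which monotonicity of $s\mapsto\tfrac12 h_t(x+s)+\tfrac12 h_t(x-s)$ in $s$ and $\Phi(A)\ge\phi\min(x,1-x)$ yield exactly your displacement $2\phi t_x$. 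Modulo writing that out, the proposal is a faithful proof of the cited result; citing \cite{lovasz1993random}, as the paper does, is of course also sufficient.
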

\noindent As a consequence, we have the following bound on the mixing time.
\begin{cor}\label{cor:mixing}
Given $\epsilon > 0$ and $M \equiv \sup \frac{\pi_0(A)}{\pi(A)}$, after $t(\epsilon) \equiv \lceil \frac{2}{\phi^2} \log(\sqrt{M}/\epsilon) \rceil$ steps of the chain, we have $d_{TV}(\pi_{t(\epsilon)}, \pi) \leq \epsilon$.  Thus the Markov chain mixes in $\Otilde(\phi^{-2})$ steps from a warm start.
\end{cor}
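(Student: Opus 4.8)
This is a direct consequence of the \Lov--Simonovits bound \eqref{eq:tv_convergence}, so the plan is simply to substitute the chosen number of steps and simplify. First I would note that the hypotheses of the theorem are met: the lazy version of Algorithm \ref{alg:johns_walk} is, by construction, a lazy chain, and the Metropolis filter together with the rejection of moves with $x \notin \mE_y(r)$ makes it reversible with respect to the uniform stationary measure $\pi$, while $M = O(1)$ is exactly the warm-start assumption. Hence \eqref{eq:tv_convergence} applies with $\pi_0$ the starting distribution and $\pi$ uniform on $\mK$.

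Next, I would use the elementary inequality $1 - u \le e^{-u}$ (valid for all real $u$) with $u = \phi^2/2$ to get $\left(1 - \tfrac{\phi^2}{2}\right)^t \le \exp\!\left(-\tfrac{t\phi^2}{2}\right)$ for every $t \ge 0$. Taking $t = t(\epsilon) = \ceil{\tfrac{2}{\phi^2}\log(\sqrt{M}/\epsilon)}$, the ceiling guarantees $t(\epsilon) \ge \tfrac{2}{\phi^2}\log(\sqrt{M}/\epsilon)$, so $\exp\!\left(-\tfrac{t(\epsilon)\phi^2}{2}\right) \le \exp\!\left(-\log(\sqrt{M}/\epsilon)\right) = \epsilon/\sqrt{M}$. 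Plugging this into \eqref{eq:tv_convergence} gives
$$
d_{TV}(\pi_{t(\epsilon)}, \pi) \le \sqrt{M}\left(1 - \frac{\phi^2}{2}\right)^{t(\epsilon)} \le \sqrt{M} \cdot \frac{\epsilon}{\sqrt{M}} = \epsilon,
$$
which is the first claim.

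For the second claim, I would observe that $t(\epsilon) \le \tfrac{2}{\phi^2}\log(\sqrt{M}/\epsilon) + 1 = O\!\left(\phi^{-2}\log(\sqrt{M}/\epsilon)\right)$, and that the factor $\log(\sqrt{M}/\epsilon)$ depends only on the warm-start constant $M$ and the target accuracy $\epsilon$ — precisely the kind of constant that the $\Otilde(\cdot)$ notation suppresses (see the abstract's footnote). Therefore the mixing time is $\Otilde(\phi^{-2})$ from a warm start. I do not anticipate any real obstacle here: the only things to be careful about are confirming the laziness/reversibility hypotheses (already guaranteed by the design of Algorithm \ref{alg:johns_walk}) and handling the ceiling, both of which are immediate; the substantive work of the paper is instead the lower bound on $\phi$, carried out in the subsequent subsections.
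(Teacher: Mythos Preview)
Your proposal is correct and is exactly the standard one-line verification the paper has in mind; the paper itself states the corollary as an immediate consequence of \eqref{eq:tv_convergence} without writing out a proof, and your use of $1-u\le e^{-u}$ followed by substituting $t(\epsilon)$ is the routine way to fill in that step.
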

\noindent To find mixing times, it then suffices to lower-bound the conductance $\phi$.

\subsection{Isoperimetry}
The typical means by which one finds lower bounds on the conductance is via isoperimetric inequalites.  We first restate the cross-ratio used in isoperimetric inequality we will employ.

\begin{defn}[Cross-Ratio]
Let $x, y \in \mK$, and let $p, q$ be the end points of a chord in $\mK$ passing through $x, y$ where the cross-ratio is defined to be
$$
\sigma(x, y) = \frac{|x-y||p-q|}{|p - x||y - q|},  
$$
where $|\cdot|$ denotes the Euclidean norm. 

Additionally, for any $S_1, S_2 \subset \mK$, let 
$$
\sigma(S_1, S_2) = \inf_{x \in S_1, y \in S_2} \sigma(x, y).
$$
\end{defn}
\noindent In \cite{lovasz1999hit}, \Lov proved an isoperimetric inequality involving the cross-ratio from which the conductance $\phi$ may be lower-bounded for the special case of the uniform distribution on a convex body $\mK \subset \R^n$.  It was extended to log-concave measures by \Lov and Vempala in \cite{lovasz2007geometry} for which the uniform measure on convex body is a special case. We state the latter result as follows.

\begin{thm}[\Lov and Vempala]\label{thm:isoperimetry}
For any log-concave measure $\pi(\cdot)$ supported on $\mK$ and a partition of $\mK$ into measurable subsets $S_1$, $S_2$, and $S_3 = \mK\setminus(S_1 \cup S_2)$, we have
\begin{equation}\label{eq:isoperimetry}
\pi(S_3) \geq \sigma(S_1, S_2) \pi(S_1)\pi(S_2).
\end{equation}
\end{thm}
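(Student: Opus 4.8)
This is the localization inequality of \Lov and Vempala \cite{lovasz2007geometry}, so one may simply invoke that reference; the plan for a self-contained proof is contraposition together with the Localization Lemma of \Lov and Simonovits (in the refined form of Kannan, \Lov and Simonovits), which reduces an inequality between integrals of a log-concave measure to a one-dimensional statement along a chord of $\mK$. Suppose the inequality fails for some log-concave probability measure $\pi$ on $\mK$ and partition $S_1, S_2, S_3$; pick $c$ with $\sigma(S_1, S_2) > c$ and $\pi(S_3) < c\,\pi(S_1)\pi(S_2)$, noting $\sigma(x,y) \geq \sigma(S_1,S_2) > c$ for all $x \in S_1$, $y \in S_2$. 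Because $\pi(S_1)\pi(S_2)$ is a bilinear---not linear---functional of the measure, I would not feed it to localization directly; instead I would study
$$
\inf\set{\nu(S_3) \st \nu \text{ log-concave on }\mK,\ \nu(\mK) = 1,\ \nu(S_1) = \pi(S_1),\ \nu(S_2) = \pi(S_2)},
$$
the minimization of the \emph{single} linear functional $\nu \mapsto \nu(S_3)$ under three linear equality constraints, which is $\leq \pi(S_3) < c\,\pi(S_1)\pi(S_2)$ since $\pi$ itself is feasible.

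Next I would invoke the Localization Lemma to conclude that this infimum is unchanged if $\nu$ ranges only over ``needles'': measures concentrated on a chord $[p,q] = \ell \cap \mK$ of $\mK$, with weight $h$ on $[p,q]$ equal to a nonnegative power of an affine function times the restricted density---hence log-concave---normalized so that $\int_{[p,q]} h = 1$. Thus there exists such a needle with $\int_{A_1} h = \pi(S_1)$, $\int_{A_2} h = \pi(S_2)$ and $\int_{A_3} h < c\,\pi(S_1)\pi(S_2)$, where $A_i := S_i \cap [p,q]$, so that $A_1, A_2, A_3$ partition $[p,q]$.

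It then remains to contradict the last bound by proving the one-dimensional inequality: for a log-concave $h$ on a segment $[p,q]$ with $\int_{[p,q]} h = 1$ and any partition $A_1 \cup A_2 \cup A_3 = [p,q]$,
$$
\int_{A_3} h \ \geq\ \sigma_0 \Big(\int_{A_1} h\Big)\Big(\int_{A_2} h\Big), \qquad \sigma_0 := \inf_{x \in A_1,\, y \in A_2} \frac{|x - y|\,|p-q|}{|p-x|\,|y-q|} \ \geq\ \sigma(S_1, S_2),
$$
the last step holding since $A_i \subseteq S_i$ and the chord of $\mK$ through $x,y \in [p,q]$ is precisely $[p,q]$. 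I would first rearrange $A_1$ to the subinterval adjacent to $p$ of the same $h$-mass and $A_2$ to the subinterval adjacent to $q$ of the same $h$-mass; this preserves each $\int_{A_i} h$ and only increases $\sigma_0$, so it suffices to treat $A_1 = [p,u]$, $A_2 = [v,q]$, $A_3 = [u,v]$ with $\sigma_0 = \frac{(v-u)(q-p)}{(u-p)(q-v)}$. The resulting inequality $\big(\int_u^v h\big)\big(\int_p^q h\big) \geq \frac{(v-u)(q-p)}{(u-p)(q-v)}\big(\int_p^u h\big)\big(\int_v^q h\big)$ is an equality for constant $h$, and in general follows from log-concavity of $h$ (equivalently of $x \mapsto \int_p^x h$ and $x \mapsto \int_x^q h$); a further one-dimensional extreme-point reduction of $h$ to an exponential weight makes the last estimate routine.

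The conceptual obstacle is the first step: the quantity to be bounded from below is the product $\pi(S_1)\pi(S_2)$, to which the Localization Lemma does not directly apply, and the fix is precisely the reformulation above as a constrained minimization of a single linear functional, after which localization goes through unchanged. The remaining technical work is the one-dimensional log-concave estimate, together with checking that $\sigma_0$ moves monotonically under the rearrangement so that the extreme-interval configuration is genuinely the worst case.
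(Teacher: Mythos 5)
The paper offers no proof of this theorem at all---it is imported as a black box from \cite{lovasz2007geometry} (with the uniform-measure case going back to \cite{lovasz1999hit})---so your opening move of simply invoking that reference is exactly what the paper does and is all that is required here.

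Your supplementary self-contained plan has the right architecture (contradiction, localization to a needle, a one-dimensional cross-ratio inequality for log-concave weights), and you correctly identify the bilinearity of $\pi(S_1)\pi(S_2)$ as the crux; but the specific fix you propose has a gap as stated. The \Lov--Simonovits/KLS Localization Lemma produces a needle from two \emph{strict inequalities} $\int g\,d\pi>0$, $\int h\,d\pi>0$; it does not assert that the infimum of a linear functional over log-concave probability measures subject to the \emph{equality} constraints $\nu(S_1)=\pi(S_1)$, $\nu(S_2)=\pi(S_2)$ is attained on needles, and since the class of log-concave measures is not closed under convex combination, one cannot get there by an extreme-point argument either. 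The statement you actually need is the later multi-constraint localization theorem of Fradelizi and Gu\'edon, a genuinely stronger tool than the lemma you cite. The standard way to stay within the two-functional lemma is the ``four-function'' reduction of Kannan--\Lov--Simonovits: assuming $\pi(S_3)<c\,\pi(S_1)\pi(S_2)$ with $c<\sigma(S_1,S_2)$, take for instance $g=f\bigl(\chi_{S_1}-\tfrac{1}{c\,\pi(S_2)}\chi_{S_3}\bigr)$ and $h=f\bigl(\chi_{S_2}-\pi(S_2)+\varepsilon\bigr)$, both of which have positive integral; the resulting needle, with segment $I$ and induced log-concave measure $\nu$, satisfies $\nu(A_1)\nu(A_2)>\tfrac{\pi(S_2)-\varepsilon}{c\,\pi(S_2)}\,\nu(A_3)\nu(I)$, which for small $\varepsilon$ contradicts the one-dimensional inequality $\nu(A_3)\nu(I)\ge\sigma\,\nu(A_1)\nu(A_2)$. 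Your one-dimensional endgame (rearranging $A_1$ and $A_2$ to end subintervals of equal mass and then reducing to exponential weights) is the standard one and is fine; just note that the claimed monotonicity of $\sigma_0$ under the rearrangement deserves a line of verification, especially when $A_1$ and $A_2$ are interleaved on the segment.
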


\subsection{Mixing of John's Walk}
\noindent The key step in proving conductance lower bounds is to show that if two points are close in geometric distance, then they are close in statistical distance.  Note that given John's ellipsoid $\mE_x = \set{E_x u + x \st |u| \leq 1}$, a local norm is induced via 
$$
\|y - x\|_x^2 = (y-x)^T E_x^{-2} (y - x).
$$
We first relate this local norm to the cross-ratio as follows.
\begin{thm}\label{thm:norm_bound}
Let $\|\cdot\|_x$ denote the norm induced by the John's ellipsoid of $\mK_x^s$.  Then
$$
\sigma(x, y) \geq \frac{1}{\sqrt{n}}\|y- x\|_x.
$$
\end{thm}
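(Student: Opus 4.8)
The plan is to pass to the affine frame in which the John's ellipsoid $\mE_x$ of $\mK_x^s$ becomes the unit ball, use that the cross-ratio is affine-invariant, and then exploit the central symmetry of $\mK_x^s$ together with the sandwiching $\mB \subseteq (\text{body}) \subseteq \sqrt{n}\,\mB$ provided by John's theorem for origin-symmetric bodies, i.e.\ \eqref{eq:john_nesting2}.

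First I would apply the invertible affine map $T(w) = E_x^{-1}(w - x)$. It sends $x$ to the origin and $\mE_x$ to $\mB$, and, $E_x$ being symmetric positive definite, it satisfies $|T(y)|^2 = (y-x)^T E_x^{-2}(y-x) = \|y-x\|_x^2$. The image $L := T(\mK_x^s)$ equals $T(\mK) \cap (-T(\mK))$, which is symmetric about the origin; since the maximal-volume inscribed ellipsoid is equivariant under invertible affine maps, the John's ellipsoid of $L$ is $T(\mE_x) = \mB$, so \eqref{eq:john_nesting2} gives $\mB \subseteq L \subseteq \sqrt{n}\,\mB$. Because affine maps preserve ratios of lengths of collinear segments, $\sigma(x,y) = \sigma\bigl(0, T(y)\bigr)$, and it therefore suffices to bound $\sigma\bigl(0, T(y)\bigr)$ from below.

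Next, taking $y \neq x$ (the case $y = x$ being trivial), I would write $d := |T(y)| = \|y-x\|_x > 0$ and $\hat u := T(y)/d$, and I would describe the chord of $T(\mK)$ through the origin in direction $\hat u$ by its endpoints $-\alpha\hat u$ and $\beta\hat u$; from $\mB \subseteq L \subseteq T(\mK)$ we get $\alpha, \beta \geq 1$, and since $T(y) = d\hat u$ sits on this chord, $0 < d \leq \beta$ (and $d < \beta$ unless $y \in \partial\mK$, in which case $\sigma = \infty$ and there is nothing to prove). Feeding the collinear points $-\alpha\hat u,\ 0,\ d\hat u,\ \beta\hat u$ into the definition of the cross-ratio yields
$$
\sigma\bigl(0, T(y)\bigr) = \frac{d\,(\alpha + \beta)}{\alpha\,(\beta - d)}.
$$
The key geometric observation is that the $\hat u$-chord of $L = T(\mK) \cap (-T(\mK))$ is the intersection of the $\hat u$-chords of $T(\mK)$ and of $-T(\mK)$, hence runs from $-\min(\alpha,\beta)\,\hat u$ to $\min(\alpha,\beta)\,\hat u$; together with $L \subseteq \sqrt{n}\,\mB$ this forces $\min(\alpha,\beta) \leq \sqrt{n}$. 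It then remains to combine $\beta - d \leq \beta$ with the elementary inequality $\frac{\alpha\beta}{\alpha+\beta} \leq \min(\alpha,\beta) \leq \sqrt{n}$, which gives $\alpha(\beta - d) \leq \alpha\beta \leq \sqrt{n}\,(\alpha+\beta)$ and hence $\sigma(x,y) = \sigma\bigl(0, T(y)\bigr) \geq d/\sqrt{n} = \|y-x\|_x/\sqrt{n}$.

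I expect the difficulty to be conceptual rather than computational: one must realize that the length to control is that of the chord of the \emph{symmetrization} $\mK_x^s$ (equivalently $\min(\alpha,\beta)$), not any chord of $\mK$ itself. Indeed $\mK$ may be arbitrarily elongated in the direction $y - x$, so $\alpha$ and $\beta$ are individually unbounded, but the cross-ratio only sees their harmonic-type combination $\alpha\beta/(\alpha+\beta) \leq \min(\alpha,\beta)$, which John's theorem for centrally symmetric bodies pins down to at most $\sqrt{n}$. Once this is recognized, the remaining steps are just bookkeeping with the cross-ratio identity and affine invariance.
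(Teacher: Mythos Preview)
Your proof is correct and follows essentially the same route as the paper: reduce by affine invariance to the case where $\mE_x=\mB$, then use that the chord of the symmetrization $\mK_x^s$ through $x$ has half-length $\min(\alpha,\beta)\le\sqrt{n}$ by \eqref{eq:john_nesting2}, which forces $\sigma(x,y)\ge\|y-x\|_x/\sqrt{n}$. Your write-up is in fact more explicit than the paper's about why the bound concerns the chord of $\mK$ while the $\sqrt{n}$ comes from $\mK_x^s$; the paper compresses this into the single remark that the last inequality follows from \eqref{eq:john_nesting2}.
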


\begin{proof}
Noting that the cross-ratio is invariant to affine transformations, without loss of generality we may assume by a suitable affine transformation that the John's ellipsoid of $\mK_x^s$ is the unit ball, and thus $\|y - x\|_x = |y- x|$. Let $p, x, y, q$ denote successive points on a chord through $\mK_x^s$.  Then 
$$
\begin{aligned}
\sigma(x,y) &= \frac{|x-y||p-q|}{|p-x||y-q|} \\
&\geq \frac{|x-y|\left(|p-x| + |y - q|\right)}{|p-x||y-q|} \\
&\geq \max \left(\frac{|x-y|}{|y-q|}\, ,\, \frac{|x-y|}{|p-x|} \right) \\
&\geq \frac{|x-y|}{\sqrt{n}},
\end{aligned}
$$
where the last inequality follows from the containment in equation \eqref{eq:john_nesting2}.
\end{proof} 

Before bounding the statistical distance between $P_x$ and $P_y$ given a bound on the geometric distance between $x$ and $y$, we first state some useful lemmas regarding the ellipsoids $\mE_x$ and $\mE_y$.  The next lemma is a generalization of the Cauchy-Schwarz inequality to semidefinite matrices.
\begin{lem}[Semidefinite Cauchy-Schwarz] \label{lem:semidef_cs}  Let $\alpha_1, \ldots, \alpha_m \in \R$ and let $A_1, \ldots, A_m \in \R^{r \times n}$.  Then
\begin{equation}
\left(\sum_{i=1}^m \alpha_i A_i\right) \left(\sum_{i=1}^m \alpha_i A_i\right)^T \preceq \left(\sum_i \alpha_i^2\right)\left(\sum_{i=1}^m A_i A_i^T\right),
\end{equation}
where $A \preceq B$ signifies that $B - A$ is positive semidefinite.
\end{lem}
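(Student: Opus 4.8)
The plan is to prove this matrix inequality by reducing it to the ordinary (vector) Cauchy--Schwarz inequality applied one test vector at a time. Recall that to show $X \preceq Y$ for symmetric matrices $X, Y$, it suffices to show $v^T X v \le v^T Y v$ for every $v \in \R^r$. So fix an arbitrary $v \in \R^r$ and compute both sides against $v$.

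First I would set $b_i \equiv A_i^T v \in \R^n$ for each $i$, so that $v^T A_i A_j^T v = \ip{b_i}{b_j}$. Then the left-hand side evaluated at $v$ is
\[
v^T \left(\sum_i \alpha_i A_i\right)\left(\sum_j \alpha_j A_j\right)^T v = \left| \sum_{i=1}^m \alpha_i b_i \right|^2,
\]
while the right-hand side evaluated at $v$ is
\[
\left(\sum_i \alpha_i^2\right) \sum_{i=1}^m v^T A_i A_i^T v = \left(\sum_i \alpha_i^2\right) \sum_{i=1}^m |b_i|^2.
\]
So the claimed inequality is exactly $\left| \sum_i \alpha_i b_i \right|^2 \le \left(\sum_i \alpha_i^2\right)\left(\sum_i |b_i|^2\right)$ for all choices of vectors $b_i \in \R^n$. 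This follows from the triangle inequality and Cauchy--Schwarz: $\left|\sum_i \alpha_i b_i\right| \le \sum_i |\alpha_i|\,|b_i| \le \left(\sum_i \alpha_i^2\right)^{1/2}\left(\sum_i |b_i|^2\right)^{1/2}$, and then squaring gives the result. Since $v$ was arbitrary, the matrix inequality holds.

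There is no real obstacle here; the only thing to be slightly careful about is the direction of the reduction (testing against vectors in $\R^r$, not $\R^n$, since the $A_i$ are $r \times n$ and the matrices on both sides of the inequality live in $\R^{r \times r}$) and making sure the indices in the double sum are handled cleanly. An alternative one-line proof, which I might mention, is to write $\sum_i \alpha_i A_i = M w$ where $M = [A_1 \mid \cdots \mid A_m]$ is the $r \times mn$ block matrix and $w = (\alpha_1 I_n, \ldots, \alpha_m I_n)^T$ stacked appropriately — but the per-test-vector argument is cleaner and self-contained, so that is the route I would take.
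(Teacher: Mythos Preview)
Your argument is correct. It differs from the paper's proof in method, though: the paper works entirely at the matrix level, expanding the positive semidefinite sum
\[
0 \;\preceq\; \frac{1}{2}\sum_{i=1}^m\sum_{j=1}^m (\alpha_j A_i - \alpha_i A_j)(\alpha_j A_i - \alpha_i A_j)^T
\]
and collecting terms to get the inequality directly --- essentially the Lagrange-identity proof of Cauchy--Schwarz, lifted verbatim to matrices. Your route instead reduces to scalars by testing against $v\in\R^r$, invoking the triangle inequality and the ordinary Cauchy--Schwarz on the resulting vectors $b_i = A_i^T v$. Your version is arguably more elementary and requires no matrix bookkeeping; the paper's version has the mild advantage of producing an explicit sum-of-squares certificate and avoiding the intermediate triangle-inequality step. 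Either is fine here.
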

\begin{proof}
The proof is as in lemma 3.11 in \cite{kannan2012random}.  For all $i$ and $j$,
$$
(\alpha_j A_i - \alpha_i A_j)(\alpha_j A_i - \alpha_i A_j)^T \succeq 0.
$$
Thus
$$
\begin{aligned}
0 &\preceq \frac{1}{2}\sum_{i=1}^m\sum_{j=1}^m (\alpha_j A_i - \alpha_i A_j)(\alpha_j A_i - \alpha_i A_j)^T \\
&= \frac{1}{2} \sum_{i=1}^m \left[\left(\sum_{i=1}^m \alpha_j^2\right)A_i A_i^T  - \alpha_i A_i \sum_{j=1}^m \left(\alpha_j A_j^T\right) -  \left(\sum_{j=1}^m \left(\alpha_j A_j\right)\right)\left(\alpha_i A_i^T\right)  + \alpha_i^2 \sum_{i=1}^m A_j A_j^T\right]\\
&=  \left(\sum_i \alpha_i^2\right)\left(\sum_{i=1}^m A_i A_i^T\right) - \left(\sum_{i=1}^m \alpha_i A_i\right) \left(\sum_{i=1}^m \alpha_i A_i\right)^T.
\end{aligned}
$$
\end{proof} \noindent

Now we study how the volume and aspect ratio of the John's ellipsoid changes from a move from $x$ to $y$.  If the John's ellipsoid centered at $x = 0$ is the unit ball, and we make a move to $y$, the matrix $E_y$ such that  $$\mE_y = \{z| (z - y)^T E_y^{-2} (z - y) \leq 1\}$$ is the unique (from John's Theorem) $\arg\max$  over positive definite matrices of $- \log \det E$ under the constraint that
\begin{equation} \label{eq:maxvolatym} 
\mE_y \in \mK_y^s.
 \end{equation}

The John's ellipsoid of $\mK_x^s$ is a unit ball at the origin. Let us translate $\mK$ by $x-y$. All the contact points with $\mK + (x-y)$ of $\mE_x + (x-y)$ must lie on the boundary of $\mE_y$ or outside $\mE_y$. This condition can be rewritten as 
\begin{equation}\label{eq:3.6}  |E_y u_i| + \ip{u_i}{y} \leq 1,\quad i = 1, \ldots, m.\end{equation}

Note that we {\it do not} claim that $E_y$ is the matrix with largest determinant that satisfies the above constraints. There can be other constraints corresponding to contact points for $\mE_y$ that are not contact points of $\mE_x$.

Using Theorem \ref{thm:john}, and Lemma \ref{lem:semidef_cs}, we deduce an upper bound on $\det E_y$ as follows.

Recall that we will denote a universal positive constant that is large by $C$ and a universal positive constant that is small by $c$.

\begin{lem}\label{lem:det_upper}
Let $r = cn^{-5/2}$, and assume $y$ is chosen from a ball of radius $r$ such that $\|y - x\|_x = |y - x| \leq r$.  Then 
$$
\det E_y \leq 1 + 2 n^{-2}.
$$
\end{lem}
\begin{proof}
Note that by (\ref{eq:3.6}), $E_y$ satisfies the constraints $|E_y u_i| \leq 1 - u_i^T y.$ Since the weights $c_i$ corresponding to the John's ellipsoid $\mK_x^s$ are positive, the constraint implies that
$$
\sum_i c_i u_i^T E_y^2 u_i \leq \sum_i c_i(1- u_i^Ty)^2.
$$
By \eqref{eq:john_cond2}, \eqref{eq:john_csum}, and using the linearity and cyclic invariance of the trace, we have
$$
\begin{aligned}
\tr(E_y^2) &\leq \sum_i c_i - 2 \sum_i c_i u_i^T y + \sum_i c_i (u_i^T y)^2 \\
&= n - 2 \left(\sum_ic_i u_i\right)^T y + y^T \left(\sum_i c_i u_i u_i^T \right)y \\
&=  n - 2 \left(\sum_i c_i u_i \right)^T y + |y|^2.
\end{aligned}
$$
Considering $|\sum_i c_i u_i|$ to bound the middle term, we may employ Lemma \ref{lem:semidef_cs}.  Letting $\alpha_i = \sqrt{c_i}$ and $A_i = \sqrt{c_i} u_i$, we have
$$
\left(\sum_i c_i u_i\right)\left(\sum_i c_i u_i\right)^T \preceq \left(\sum_i c_i\right)\left(\sum_i c_i u_i u_i^T\right)
$$
Noting the right side is equal to $n I_n$, it follows that
$$
\bigg|\sum_i c_i u_i\bigg| \leq \sqrt{n}.
$$
Therefore, if $y$ is chosen from a ball of radius $cn^{-5/2}$, by Cauchy-Schwarz we conclude that
\begin{equation}
\tr(E_y^2) \leq n  + cn^{-2}.\label{eq:Eysquared}
\end{equation}
Now letting the eigenvalues of $E_y$ be denoted $d_i > 0 $, we have by the arithmetic-geometric mean inequality,
$$
\begin{aligned}
(\det E_y)^{2/n} &= \left(\prod_{i=1}^n d_i^2 \right)^{1/n} \\
&\leq \frac{1}{n}\sum_{i=1}^n d_i^2 \\
&\leq \frac{1}{n}\left(n + 2n^{-2}\right),
\end{aligned}
$$
Thus,
$$
\begin{aligned}
\det E_y &\leq&  (1 + cn^{-3})^{n/2} \\
& \leq&  1 + 2n^{-2}.
\end{aligned}
$$

\end{proof}

We deduce a lower-bound on $\det E_y$ by considering a positive definite matrix of the form $E = \beta(I - \alpha y y^T)$ such that the corresponding ellipsoid $y + \mE$ is contained in the unit ball $\mE_x$.  Note that such a matrix has eigenvalue $\beta(1 - \alpha |y|^2)$ of multiplicity 1 corresponding to unit eigenvector $y/|y|$, and eigenvalues $\beta$ of multiplicity $n - 1$ corresponding to any unit vector $z$ which is orthogonal to $y$.  
\begin{lem}\label{lem:feasible}$y + \mE \subseteq \mE_x \subseteq \mK$, and hence $\lambda(\mE_y) \geq \lambda(\mE).$
\end{lem}
\begin{proof} Assume after rescaling the $\mE_x$ is the unit ball.
We divide the points $u$ on the boundary of $\mE_x$  into two sets: $A = \set{u\st \ip{u}{y} \leq \frac{|y|}{\sqrt{n}}}$ and $B = \set{u \st \ip{u}{y}  > \frac{|y|}{\sqrt{n}}}$.  

If $u\in A$, we have
$$
1 - \ip{u}{y} \geq 1 - \frac{|y|}{\sqrt{n}},
$$
and noting that $E \succ 0$ and $0 \prec (I_n- \alpha y y^T) \prec I_n$,
$$
|E u| \leq \beta \leq 1 - \ip{u}{y}.
$$

 If $u \in B$, we have
$$
\begin{aligned}
|E u|^2 &= \beta^2 u^T(I - \alpha y y^T)^2 u\\
&\leq u^T(I - \alpha y y^T)^2 u \\
&= 1 - \alpha \ip{u}{y}^2\\
& < 1 - \ip{u}{y}.
\end{aligned}
$$
Thus $\mE + y \subseteq \mE_x \subseteq \mK$. Hence the volume of $\mE_y$ is at least the volume of $\mE$.  
\end{proof}

\begin{lem}\label{lem:det_lower}
Let $r = cn^{-5/2}$, and assume $y$ is chosen from a ball of radius $r$ such that $\|y - x\|_x = |y - x| \leq r$.  Then 
$$
\det E_y \geq 1 - 3 n^{-2}.
$$
\end{lem}
\begin{proof}
Considering the matrix $E$ as provided by Lemma \ref{lem:feasible}, $E_y$ satisfies 
$$
\begin{aligned}
\det E_y &\geq \det E  \\
&= \beta^n(1-\alpha|y|^2) \\
&= \left(1-\frac{|y|}{\sqrt{n}}\right)^n(1 - 2\sqrt{n}|y|).
\end{aligned}
$$
Thus 
$$
\begin{aligned}
\det E_y &\geq (1 - cn^{-2})(1 - 2cn^{-2}) \\
&\geq (1 - 3 n^{-2}).
\end{aligned}
$$
\end{proof}

Lemmas \ref{lem:det_upper} and \ref{lem:det_lower} establish that for some universal constant $c > 0$ and $|y - x| \leq cn^{-5/2}$, the volume ratio of $\mE_y$ and $\mB_x$ satisfies
\begin{equation} \label{eq:vol_ratio}
1 - 3n^{-2} \leq \frac{\lambda(\mE_y)}{\lambda(\mB_x)} \leq 1 + 2n^{-2},
\end{equation}
This does not necessarily indicate that the shapes of $\mE_x$ and  $\mE_y$ are close,  a property we require so rejection does not occur too frequently.  The following lemma guarantees that this is indeed the case.

\begin{lem}\label{lem:min_eigen}
Let $d_1 \geq d_2 \geq \ldots \geq d_n > 0$ denote the eigenvalues of $E_y$.  Then  the minimum eigenvalue $d_n$ satisfies
$$
d_n \geq 1 - 4n^{-1}
$$ 
for large enough $n$. Similarly,  the maximum eigenvalue $d_1$ satisfies
$$
d_1 \leq 1 + Cn^{-1}
$$ 
for large enough $n$.
\end{lem}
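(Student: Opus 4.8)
The plan is to deduce the eigenvalue lower bound from two scalar facts that are already available: the trace bound $\tr(E_y^2) \le n + cn^{-2}$ proved inside Lemma~\ref{lem:det_upper}, and the determinant lower bound $\det E_y \ge 1 - cn^{-2}$ from Lemma~\ref{lem:det_lower}. Write $d_1 \ge \cdots \ge d_n > 0$ for the eigenvalues of $E_y$ and set $t := d_n^2$. Splitting the smallest eigenvalue off from the trace gives $\sum_{i=1}^{n-1} d_i^2 \le n + cn^{-2} - t$, so the AM--GM inequality yields $\prod_{i=1}^{n-1} d_i^2 \le \bigl(\tfrac{n + cn^{-2} - t}{n-1}\bigr)^{n-1} \le \exp(1 + cn^{-2} - t)$. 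On the other hand $\prod_{i=1}^{n-1} d_i^2 = (\det E_y)^2/t \ge (1-cn^{-2})^2/t$. Combining the two inequalities and rearranging produces a closed inequality in $t$ alone, of the form $t\,e^{1-t} \ge (1-cn^{-2})^2 e^{-cn^{-2}} \ge 1 - c'n^{-2}$ for a slightly enlarged constant $c'$ and all sufficiently large $n$.

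It then remains to analyze the scalar function $\phi(t) = t e^{1-t}$ on $(0,\infty)$. One has $\phi'(t) = (1-t)e^{1-t}$, so $\phi$ is increasing on $(0,1]$ with maximum value $\phi(1)=1$, and $\phi''(1)=-1$, so near $t=1$ one has $\phi(t) = 1 - \tfrac12(1-t)^2 + O((1-t)^3)$. Moreover the trace bound alone forces $t = d_n^2 \le \tfrac1n\tr(E_y^2) \le 1 + cn^{-3}$, so $t$ lies in a fixed bounded neighborhood of $1$ where $\phi$ is either increasing or only mildly decreasing. Hence the inequality $\phi(t) \ge 1 - c'n^{-2}$ forces $(1-t)^2 = O(n^{-2})$, i.e.\ $d_n^2 = t \ge 1 - O(n^{-1})$, and therefore $d_n \ge \sqrt{1 - O(n^{-1})} \ge 1 - cn^{-1}$ for large $n$ after absorbing constants.

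I expect the only delicate points to be bookkeeping rather than conceptual. First, it is essential to keep the $-d_n^2$ term when peeling the smallest eigenvalue out of the trace: if one instead bounded $\prod_{i<n} d_i^2 \le (\tfrac{n + cn^{-2}}{n-1})^{n-1} \approx e$, the resulting inequality $(\det E_y)^2/t \le e$ would only give $t$ bounded below by a constant strictly less than $1$, not $1 - O(n^{-1})$. Second, the $O(n^{-2})$ precision of the determinant bound in Lemma~\ref{lem:det_lower} is genuinely used; a cruder bound $\det E_y \ge \kappa$ with $\kappa < 1$ a constant would again yield only a constant lower bound on $d_n$. With these two observations the argument is a short computation, and no further structural input about the optimization problem~\eqref{eq:maxvolaty} (KKT conditions, the feasible matrix of Lemma~\ref{lem:feasible}, etc.) is required beyond what Lemmas~\ref{lem:det_upper} and~\ref{lem:det_lower} already provide.
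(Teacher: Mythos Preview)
Your proposal is correct and follows essentially the same route as the paper: combine the trace bound $\tr(E_y^2)\le n+cn^{-2}$ and the determinant bound $\det E_y\ge 1-cn^{-2}$ via AM--GM on the remaining $n-1$ eigenvalues, reduce to an inequality of the form $x\,e^{1-x}\ge 1-O(n^{-2})$, and read off $(1-x)^2=O(n^{-2})$. The only cosmetic difference is that the paper first passes from $\tr(E_y^2)$ to $\tr(E_y)$ by the power-mean inequality and works with $x=d_n$, whereas you work directly with $x=d_n^2$ and take a square root at the end.
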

\begin{proof}
Assume the eigenvalues are ordered such that $d_1 \geq \ldots \geq d_n$.  By Lemma~\ref{lem:det_lower} we see that
$$
\det E_y  = \prod_{i=1}^n d_i \geq 1 - 3n^{-2}.
$$
By the power mean inequality and $\tr(E_y^2) \leq n + c/n^2$ (see (\ref{eq:Eysquared})), it follows that
$$
\begin{aligned}
\frac{1}{n} \tr(E_y) &\leq \sqrt{\frac{1}{n}\tr(E_y^2)} \\
&\leq 1 + n^{-3},
\end{aligned}
$$
so $\tr(E_y) \leq n + n^{-2}.$  By the arithmetic-geometric mean inequality, we thus have
$$
\begin{aligned}
(1 - 3n^{-2})/d_n &\leq \prod_{i < n} d_i \\
&\leq \left(\frac{\sum_{i< n}d_i}{n-1}\right)^{n-1} \\
&\leq \left(\frac{n + n^{-2} - d_n}{n-1}\right)^{n-1} \\
&= \left(1 + \frac{1+n^{-2}-d_n}{n-1}\right)^{n-1}.
\end{aligned}
$$
Since $(1 + z/m)^m \leq \exp(z)$ for $z \geq 0$ and integer $m \geq 1$, 
$$
(1 - 3/n^2) \leq \exp(1 + n^{-2} - d_n) d_n,
$$ 
The claim is true if $d_n \geq 1$, so we may assume that $\zeta := 1 - d_n > 0$. 

Then, $$\exp(\zeta + n^{-2})(1 - \zeta) \geq (1 - \frac{3}{n^2}).$$
This implies that 
$$\exp(\zeta)(1 - \zeta) \geq (1 - \frac{6}{n^2}).$$ 
We note that $\exp(\zeta)(1 - \zeta)$ is a monotonically decreasing function of $\zeta$ on $(0, 1)$, because its derivative is 
$$-\exp(\zeta) + \exp(\zeta)(1 - \zeta) = - \zeta \exp(\zeta).$$
If $\zeta = \Omega(1)$, then $\exp(\zeta)(1- \zeta) \leq 1 - \Omega(1).$ Therefore, we assume that $\zeta = o(1)$. But then, 

$$1 - \frac{\zeta^2(1 - \zeta)}{2} \geq \exp(\zeta)(1 - \zeta) \geq 1 - \frac{6}{n^2}$$ gives us 
$\zeta \leq \frac{4}{n}$, which implies the lower  bound on $d_n$.

To obtain the desired upper bound on $d_1,$ we first note that since there is a uniform lower bound of $1- \frac{4}{n}$ on all the $d_i$ and as the product lies within $(1 - \frac{3}{n^2}, 1 + \frac{2}{n^2})$, we obtain a uniform upper bound of $C$ on the $d_i$ and hence an upper bound on $C$ on $d_1$. This means that we may reverse the roles of $x$ and $y$, and noting that if $y - x \in \mE_x(r)$ then $x - y \in \mE_x(Cr)$ obtain the following upper bound on $d_1$.
$$d_1 \leq 1 + \frac{C}{n^2}.$$
\end{proof}

Now to derive a lower bound on the conductance for John's walk, we first must bound the statistical distance between two points given a bound on their geometric distance with respect to the local norm.  Again without loss of generality in what follows we may assume $x = 0$ and the John's ellipsoid centered at $x$ is the unit ball $\mB_x$ (otherwise perform an affine transformation such that this is the case).  Let $x, y \in \mK$ represent any two points in the body such that $\|y - x\|_x = |y - x| \leq r$, where $r \in (0, 1)$ is a constant to be specified in terms of the dimension $n$.  Let $P_x$ and $P_y$ denote the one-step transition probability measures defined at $x$ and $y$, respectively.  Let the uniform probability measures defined by the rescaled John's ellipsoids $\mE_x(r)$ and $\mE_y(r)$ be denoted $Q_x$ and $Q_y$, respectively.  We seek to bound
\begin{equation}\label{eq:tv_all_terms}
d_{TV}(P_x, P_y) \leq d_{TV}(P_x, Q_x) + d_{TV}(Q_x, Q_y) + d_{TV}(Q_y, P_y)
\end{equation}
by choosing $r$ such that the right side of \eqref{eq:tv_all_terms} is $1-\Omega(1)$.

To bound $d_{TV}(Q_x, Q_y)$ in \eqref{eq:tv_all_terms}, letting $\widetilde{Q}_y$ denote the probability measure corresponding to the uniform distribution on a ball of radius $r$ centered at $y$, we may alternatively bound
\begin{equation}\label{eq:tv_mid_term}
d_{TV}(Q_x, Q_y) \leq d_{TV}(Q_x, \widetilde{Q}_y) + d_{TV}(\widetilde{Q}_y, Q_y).
\end{equation}
We bound each term in \eqref{eq:tv_mid_term} separately.  To bound $d_{TV}(Q_x, \widetilde{Q}_y)$, note that by our assumption that $\mE_x = \mB_x$ (the unit ball at $x$), the corresponding densities with respect to the dominating Lebesgue measure $\lambda$ are 
$$
q_x(z) = \left(\frac{1}{\lambda(\mB_x(r))}\right) \cdot 1_{\set{z \in \mB_x(r)}}
$$
and 
$$
\widetilde{q}_y(z) = \left(\frac{1}{\lambda(\mB_y(r))}\right) \cdot 1_{\set{z \in \mB_y(r)}}.
$$
Thus using \eqref{eq:tv_density} and noting $\lambda(\mB_x(r)) = \lambda(\mB_y(r))$, we have
\begin{equation}\label{eq:tv_balls}
\begin{aligned}
d_{TV}(Q_x, \widetilde{Q}_y) &= 1 - \int_{\mB_x(r) \cap \mB_y(r)} q_x(z)\, d\lambda(z) \\
&= 1 - \frac{\lambda\left(\mB_x(r) \cap \mB_y(r)\right)}{\lambda(\mB_x(r))} \\
&= 1 - \frac{\lambda\left(\mB_x \cap \mB_y\right)}{\lambda(\mB_x)}.
\end{aligned}
\end{equation}
The Lebesgue measure of $\mB_x \cap \mB_y$ is equal to twice the volume of a spherical cap.  The following lemma regarding the volume of a hyperspherical cap from \cite{lovasz1993random} is useful.
\begin{lem}\label{lem:capvol}
Let $\mB_x \subset \R^n$ be the Euclidean ball of unit radius centered at $x$.  Let $\mH \subset \R^n$ define a halfspace at a distance of at least $t$ from $x$ (so $x$ is not contained in the halfspace).  Then for $t \leq \frac{1}{\sqrt{n}}$, we have
$$
\lambda(\mH\cap \mB_x) \geq \frac{1}{2}\left(1 - t\sqrt{n}\right) \lambda(\mB_x).
$$
\end{lem}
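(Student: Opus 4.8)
The plan is to reduce the statement to a one-dimensional fact about a uniform random point in the ball. After translating so that $x=0$ and rotating so that the bounding hyperplane of $\mH$ is orthogonal to the first coordinate axis, we may assume $\mH = \set{z\in\R^n : z_1 \ge t}$, with this hyperplane at distance exactly $t$ from the origin. Let $Z$ be uniformly distributed on the unit ball $\mB$. Then $\lambda(\mH\cap\mB_x)/\lambda(\mB_x) = \P(Z_1 \ge t)$, and since the law of $Z_1$ is symmetric about $0$ we have $\P(Z_1\ge t) = \tfrac12 - \P(0 \le Z_1 < t)$, so it suffices to prove $\P(0 \le Z_1 < t) \le \tfrac12 t\sqrt n$. (The hypothesis $t\le 1/\sqrt n$ plays no role beyond guaranteeing that the right-hand side of the conclusion is nonnegative; the inequality we prove holds for every $t\ge 0$.)

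Next I would write down the density of $Z_1$ by slicing $\mB$ with hyperplanes perpendicular to $e_1$: the slice at height $s\in[-1,1]$ is an $(n-1)$-dimensional ball of radius $\sqrt{1-s^2}$, so
$$
f_{Z_1}(s) = \frac{V_{n-1}}{V_n}\,(1-s^2)^{(n-1)/2},
$$
where $V_k$ denotes the volume of the unit $k$-ball. This density is decreasing in $|s|$, hence $f_{Z_1}(s)\le f_{Z_1}(0) = V_{n-1}/V_n$ for $s\in[0,1]$, and therefore $\P(0\le Z_1 < t) \le t\,(V_{n-1}/V_n)$. Thus the whole lemma reduces to the volume-ratio inequality $V_{n-1}/V_n \le \tfrac12\sqrt n$.

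This ratio estimate is the only point needing care; everything else is elementary slicing, and nothing convex-geometric or chain-specific enters. I would establish it as follows. From $V_k = \pi^{k/2}/\Gamma(k/2+1)$ one gets directly that $V_{n-1}/V_{n+1} = (n+1)/(2\pi)$. Moreover $V_k/V_{k+1}$ is increasing in $k$ — equivalently $a\mapsto\Gamma(a+\tfrac12)/\Gamma(a)$ is increasing on $(0,\infty)$, which follows because the digamma function is increasing — so
$$
\left(\frac{V_{n-1}}{V_n}\right)^{2} \le \frac{V_{n-1}}{V_n}\cdot\frac{V_n}{V_{n+1}} = \frac{V_{n-1}}{V_{n+1}} = \frac{n+1}{2\pi} \le \frac{n}{4},
$$
where the final step holds for every $n\ge 2$ and the case $n=1$ is checked by hand. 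Hence $V_{n-1}/V_n \le \tfrac12\sqrt n$, and combining with the previous paragraph we obtain $\P(Z_1\ge t) \ge \tfrac12(1 - t\sqrt n)$, which is the claim. As an alternative to the monotonicity of $V_k/V_{k+1}$ one could instead invoke Gautschi's inequality $\Gamma(a+1)/\Gamma(a+\tfrac12)\le\sqrt{a+1}$ directly, or split into even and odd $n$ and apply a Wallis-type bound on central binomial coefficients; the proof above is the shortest to state, and the bound $V_{n-1}/V_n\le\tfrac12\sqrt n$ is exactly what makes the error term in the conclusion linear in $t\sqrt n$.
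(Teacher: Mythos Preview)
Your argument is correct: the reduction to the one--coordinate marginal of a uniform point in the ball, the bound $\P(0\le Z_1<t)\le t\,V_{n-1}/V_n$ via the maximum of the density, and the estimate $V_{n-1}/V_n\le \tfrac12\sqrt n$ via $(V_{n-1}/V_n)^2\le V_{n-1}/V_{n+1}=(n+1)/(2\pi)\le n/4$ for $n\ge 2$ are all valid. There is nothing to compare against in the paper itself, since the paper does not supply a proof of this lemma but simply quotes it from \cite{lovasz1993random}; your write-up is precisely the standard slicing argument used there, so in substance you have reproduced the cited proof rather than found an alternative.

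One small remark on the reduction step: as stated in the paper the hypothesis reads ``distance at least $t$,'' whereas for a \emph{lower} bound on the cap volume one wants the bounding hyperplane at distance \emph{at most} $t$ from the center (a farther hyperplane gives a smaller cap). Your passage to distance exactly $t$ is the right normalization for the inequality you prove, and it is the case actually used downstream (in bounding the overlap of two nearby balls), but you may wish to flag that the monotonicity goes the other way from what the literal wording suggests.
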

\noindent The following lemma results trivially from lemma \ref{lem:capvol} and \eqref{eq:tv_balls}.
\begin{lem}\label{lem:tv_balls}
Let $t \leq 1$.  If $\|y - x\|_x = |y-x| \leq \frac{rt}{\sqrt{n}}$, then $$
d_{TV}(Q_x, \widetilde{Q}_y) \leq t.
$$
\end{lem}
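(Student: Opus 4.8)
The plan is to reduce everything to \eqref{eq:tv_balls}. Scaling by $1/r$ there (which leaves the volume ratio unchanged) lets us take $\mB_x,\mB_y$ to be unit balls whose centres are at distance $d := |x-y|/r$, so that
\[
d_{TV}(Q_x,\widetilde Q_y) = 1 - \frac{\lambda(\mB_x\cap\mB_y)}{\lambda(\mB_x)}.
\]
Since $t\le 1$ and $|x-y|\le rt/\sqrt n$ we have $d\le t/\sqrt n\le 1/\sqrt n$, so it suffices to show $\lambda(\mB_x\cap\mB_y)\ge\bigl(1-\tfrac{d\sqrt n}{2}\bigr)\lambda(\mB_x)$; this gives $d_{TV}(Q_x,\widetilde Q_y)\le\tfrac{d\sqrt n}{2}\le\tfrac t2\le t$.

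The key geometric fact is that the lens $\mB_x\cap\mB_y$ splits, across the perpendicular bisector hyperplane $H_0$ of the two centres, into two spherical caps: on the side of $H_0$ containing the centre of $\mB_x$, membership in $\mB_y$ is the binding constraint, so that part of the lens is exactly the corresponding cap of $\mB_y$, and symmetrically the part on the other side is a cap of $\mB_x$; the two caps meet only in a Lebesgue-null set. Each of them is a cap of a unit ball cut by a hyperplane lying at distance exactly $d/2$ from that ball's centre, with the centre on the far side --- the binding-constraint claim being a one-line expansion of $|z|^2$ against the relevant centre. Since $d/2\le 1/(2\sqrt n)\le 1/\sqrt n$, I would then apply Lemma \ref{lem:capvol} to each of the two caps with distance parameter $d/2$, obtaining that each has volume at least $\tfrac12\bigl(1-\tfrac d2\sqrt n\bigr)\lambda(\mB_x)$.

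Adding the two cap volumes yields $\lambda(\mB_x\cap\mB_y)\ge\bigl(1-\tfrac{d\sqrt n}{2}\bigr)\lambda(\mB_x)$, which finishes the proof --- in fact with the slightly stronger bound $d_{TV}(Q_x,\widetilde Q_y)\le t/2$.

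The one place that needs care --- and the reason this is not a single-line appeal to Lemma \ref{lem:capvol} --- is getting the lens decomposition right, namely that $\mB_x\cap\mB_y$ is two caps, each sitting at distance $d/2$ from its centre. Cruder lower bounds on $\lambda(\mB_x\cap\mB_y)$, e.g.\ inscribing a ball of radius $1-d/2$ in the lens or using a single supporting halfspace of $\mB_x$, lose a factor of order $\sqrt n$ and would only give $d_{TV}(Q_x,\widetilde Q_y)=O(nd)=O(t\sqrt n)$ instead of $O(t)$. Everything else is the routine bookkeeping above.
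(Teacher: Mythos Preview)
Your proof is correct and follows essentially the same route as the paper. The paper remarks, immediately before Lemma~\ref{lem:capvol}, that $\lambda(\mB_x\cap\mB_y)$ equals twice the volume of a spherical cap, and then states that Lemma~\ref{lem:tv_balls} ``results trivially'' from Lemma~\ref{lem:capvol} and \eqref{eq:tv_balls}; your write-up simply supplies the details of that two-cap decomposition and the application of Lemma~\ref{lem:capvol} with parameter $d/2$, arriving (as the computation indeed gives) at the slightly sharper bound $t/2$.
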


To bound $d_{TV}(\widetilde{Q}_y, Q_y)$, note that we are bounding the total variation distance between a density supported on a ball and a density supported on an ellipsoid with the same center.  The following lemma provides the bound.
\begin{lem}\label{lem:tv_qandqtilde}
If $\|y-x\|_x \leq r = cn^{-5/2}$, the total variation distance between $\widetilde{Q}_y$ and $Q_y$ satisfies
$$
d_{TV}(\widetilde{Q}_y, Q_y) \leq 1/4.
$$
\end{lem}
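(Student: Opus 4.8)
The plan is to normalize away the common translation and scaling, then bound the total variation distance by the relative volume of the symmetric difference of a ball and a nearby ellipsoid, controlling that volume with the eigenvalue bounds already obtained.

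Since $\widetilde{Q}_y$ is the uniform measure on the ball $r\mB + y$ and $Q_y$ is the uniform measure on $rE_y\mB + y$, and total variation distance is invariant under the common affine map $z \mapsto (z-y)/r$, it suffices to bound $d_{TV}$ between the uniform measure on the unit ball $\mB$ and the uniform measure on $E_y\mB$. Writing both densities against Lebesgue measure, using \eqref{eq:tv_density}, and using $\lambda(E_y\mB) = \det(E_y)\,\lambda(\mB)$, one gets
$$
d_{TV}(\widetilde{Q}_y, Q_y) \;=\; 1 - \frac{\lambda(\mB \cap E_y\mB)}{\max\{1,\det E_y\}\,\lambda(\mB)}.
$$
By Lemma~\ref{lem:det_upper} the denominator is at most $(1 + cn^{-2})\,\lambda(\mB)$, so it remains to show that $\lambda(\mB \setminus E_y\mB)$ is at most a small constant fraction of $\lambda(\mB)$.

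To this end I would show that $\mB \setminus E_y\mB$ is trapped in a thin spherical shell. Diagonalizing $E_y$ with eigenvalues $d_1 \ge \cdots \ge d_n > 0$, a point $z \in \mB$ fails to lie in $E_y\mB$ exactly when $\sum_i z_i^2/d_i^2 > 1$ in this basis. Since $\sum_i z_i^2 = |z|^2 \le 1$, such a $z$ satisfies $\sum_i z_i^2 (1/d_i^2 - 1) > 1 - |z|^2 \ge 0$; and because $d_i \ge d_n \ge 1 - cn^{-1}$ by Lemma~\ref{lem:min_eigen}, each factor obeys $1/d_i^2 - 1 \le c'n^{-1}$ for a constant $c'$ proportional to $c$ (for $n$ large). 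Hence $1 - |z|^2 < c'n^{-1}$, i.e. $\mB \setminus E_y\mB \subseteq \{z : 1 - c'n^{-1} < |z|^2 \le 1\}$, a set of relative volume exactly $1 - (1 - c'n^{-1})^{n/2} \le c'$ once $n$ is large enough. Substituting into the display yields $d_{TV}(\widetilde{Q}_y, Q_y) \le 1 - \frac{1 - c'}{1 + cn^{-2}} \le c' + cn^{-2}$, which is at most $1/4$ provided the universal constant $c$ in $r = cn^{-5/2}$ is chosen small enough (and $n$ is large, consistent with the hypothesis of Lemma~\ref{lem:min_eigen}).

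The only genuinely delicate point I anticipate is the constant bookkeeping: one must track how the slack $c'$ from the shell volume and the overshoot $\det E_y - 1$ propagate into the admissible size of the radius constant $c$, and verify that their sum stays below $1/4$. There is no real analytic obstacle, since the geometric facts needed---$\det E_y \in [1 - cn^{-2}, 1 + cn^{-2}]$ and $d_n \ge 1 - cn^{-1}$---are exactly what Lemmas~\ref{lem:det_upper} and \ref{lem:min_eigen} supply; no upper bound on the largest eigenvalue of $E_y$ is required, since along the long axes of $E_y$ the ellipsoid already contains the ball.
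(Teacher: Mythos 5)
Your proposal is correct and follows essentially the same route as the paper: both arguments split $d_{TV}(\widetilde{Q}_y,Q_y)$ into a volume-ratio factor controlled by Lemma~\ref{lem:det_upper} and an overlap factor controlled by Lemma~\ref{lem:min_eigen}, the latter via the containment of a slightly shrunk ball $(1-c'n^{-1})\mB$ inside $E_y\mB$. The only cosmetic difference is that you phrase the overlap bound as a thin-shell volume computation, whereas the paper phrases it as conditioning on the event $Z\in(1-\tfrac{c}{n})\mB_y(r)$; the underlying estimate is identical, and both proofs end by taking the universal constant $c$ in $r=cn^{-5/2}$ small enough.
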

\begin{proof}
Note that by  \eqref{eq:tv_density}, we have 
$$
\begin{aligned}
d_{TV}(\widetilde{Q}_y, Q_y) &= 1 - \E_{\widetilde{Q}_y}\left[\min \left(1, \, \frac{\lambda(\mB_y(r))}{\lambda(\mE_y(r))} \right)\right]\\
&= 1- \min\left[1,\, \frac{\lambda(\mB_y)}{\lambda(\mE_y)}\right] \P_{\widetilde{Q}_y}(Z \in \mE_y(r)) \\
&\leq 1- \min\left[1,\, \frac{\lambda(\mB_y)}{\lambda(\mE_y)}\right] \P_{\widetilde{Q}_y}(Z \in \mE_y(r)|B)\P_{\widetilde{Q}_y}(Z \in B),
\end{aligned}
$$
where $B$ denotes the event in which $Z \in (1 - \frac{C}{n})\cdot\mB_y(r)$.  
By lemma \ref{lem:min_eigen}, it follows that $\P_{\widetilde{Q}_y}(Z \in \mE_y(r)|B) = 1$ since the smallest eigenvalue of $E_y$ is at least $1 - Cn^{-1}$. Additionally by \eqref{eq:vol_ratio},
$$
\begin{aligned}
\min\left[1, \, \frac{\lambda(\mB_y)}{\lambda(\mE_y)}\right] &\geq \frac{1}{1+Cn^{-2}} \\ 
&\geq \exp(-Cn^{-2}).
\end{aligned}
$$
Now noting that $(1-\frac{x}{2}) \geq e^{-x}$ for $x \in [0, 1]$, we have $\P_{\widetilde{Q}_y}(Z \in B) = \left(1-\frac{C}{n}\right)^n \geq e^{-2c}$,  and
$$
d_{TV}(\widetilde{Q}_y, Q_y) \leq 1 - \exp\left(-C(2 + n^{-2})\right).$$
\end{proof}

To bound $d_{TV}(P_x, Q_x)$, we provide the following lemma.
\begin{lem}\label{lem:tv_pq}
If $\|y-x\|_x \leq r = cn^{-5/2}$, the total variation distance between $P_x$ and $Q_x$ satisfies
$$
d_{TV}(P_x, Q_x) \leq 1/4.
$$
\end{lem}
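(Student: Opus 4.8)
The plan is to mirror the proof of Lemma~\ref{lem:tv_qandqtilde}: the only discrepancy between the proposal measure $Q_x$ and the true one-step transition $P_x$ comes from Metropolis rejections (the factor $\alpha_x$) and from the reversibility test $x \in \mE_z(r)$, and both are negligible when $r = cn^{-5/2}$ with $c$ small.

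First I would unwind the total variation distance using the Lebesgue decomposition with respect to $\mu = \lambda + \delta_x$ and the density formulas \eqref{eq:proposal_density}, \eqref{eq:transition_density}. Since $\alpha_x(z) \le 1$, for $z \neq x$ we have $p_x(z) = \alpha_x(z)\,q_x(z)\,1_{\{x \in \mE_z(r),\, z \in \mE_x(r)\}} \le q_x(z)$, while $q_x$ puts no mass on the atom $\{x\}$; hence $\min(p_x,q_x) = p_x$ off $x$ and $\min(p_x,q_x)(x) = 0$. Plugging this into \eqref{eq:tv_density} yields
\[
d_{TV}(P_x, Q_x) \;=\; 1 - \int \min(p_x, q_x)\, d\mu \;=\; 1 - \E_{Q_x}\!\left[\alpha_x(Z)\, 1_{\{x \in \mE_Z(r)\}}\right],
\]
where $Z \sim Q_x$. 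Under the running normalization $\mE_x = \mB_x$, the variable $Z$ is uniform on the ball $\mB_x(r)$, so $|Z - x| = |Z| \le r$, and $\alpha_x(Z) = \min(1,\ \det E_x/\det E_Z) = \min(1, 1/\det E_Z)$. It thus remains to lower bound $\E_{Q_x}[\alpha_x(Z)\,1_{\{x \in \mE_Z(r)\}}]$.

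As in Lemma~\ref{lem:tv_qandqtilde}, I would restrict to the event $B = \{Z \in (1 - c/n)\,\mB_x(r)\}$. On $B$ I claim $x \in \mE_Z(r)$: since $x = 0$, the membership $0 \in \mE_Z(r) = \{rE_Z u + Z : |u|\le 1\}$ is equivalent to $|E_Z^{-1}Z| \le r$, and by Lemma~\ref{lem:min_eigen} the smallest eigenvalue of $E_Z$ is at least $1 - cn^{-1}$, so $|E_Z^{-1}Z| \le |Z|/(1-cn^{-1}) \le (1-c/n)\,r/(1-cn^{-1}) \le r$, provided the shrinkage constant defining $B$ is taken at least as large as the constant in Lemma~\ref{lem:min_eigen}. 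Moreover, since $B \subset \mB_x(r)$, Lemma~\ref{lem:det_upper} gives $\det E_Z \le 1 + cn^{-2}$, hence $\alpha_x(Z) \ge (1+cn^{-2})^{-1} \ge \exp(-cn^{-2})$ on $B$. Combining these with $\P_{Q_x}(B) = (1-c/n)^n \ge e^{-2c}$ (using $(1-x/2)\ge e^{-x}$ for $x\in[0,1]$, exactly as before),
\[
d_{TV}(P_x, Q_x) \;\le\; 1 - \E_{Q_x}\!\left[\alpha_x(Z)\, 1_B\right] \;\le\; 1 - \exp(-cn^{-2})(1-c/n)^n \;\le\; 1 - \exp\!\big(-c(2 + n^{-2})\big),
\]
and the bound $1/4$ follows by taking the universal constant $c$ (equivalently, the constant in $r = cn^{-5/2}$) small enough, for $n$ large.

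The only genuinely new ingredient compared with Lemma~\ref{lem:tv_qandqtilde} is controlling the reversibility indicator $1_{\{x \in \mE_Z(r)\}}$, and this is the step I would be most careful about: it is where the near-isotropy of $E_Z$ (Lemma~\ref{lem:min_eigen}) enters, and one must ensure the constant defining $B$ dominates the one from Lemma~\ref{lem:min_eigen} so that $x$ genuinely lands inside $\mE_Z(r)$. Everything else reduces to the volume-ratio estimate \eqref{eq:vol_ratio}/Lemma~\ref{lem:det_upper} and the elementary inequality $(1-x/2)\ge e^{-x}$.
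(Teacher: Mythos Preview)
Your proposal is correct and follows essentially the same approach as the paper: write $d_{TV}(P_x,Q_x)=1-\E_{Q_x}[\alpha_x(Z)\,1_{\{x\in\mE_Z(r)\}}]$, condition on the event $B=\{Z\in(1-c/n)\mB_x(r)\}$, use Lemma~\ref{lem:min_eigen} to force $x\in\mE_Z(r)$ on $B$, bound $\alpha_x(Z)$ via Lemma~\ref{lem:det_upper}, and finish with $(1-c/n)^n\ge e^{-2c}$. If anything, your treatment of the reversibility indicator (explicitly checking $|E_Z^{-1}Z|\le r$ on $B$) is more detailed than the paper's one-line invocation of Lemma~\ref{lem:min_eigen}.
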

\begin{proof}
With some abuse of notation with regards to \eqref{eq:proposal_density}, temporarily let the density of $Q_x$ with respect to the dominating measure $\mu$ as defined by \eqref{eq:mu} be 
$$
q_x(y) \equiv \frac{dQ}{d\mu}(y) =  \left(\frac{1}{\lambda(\mE_x(r))}\right) \cdot 1_{\set{y \in \mE_x(r), \, y \neq x}}.
$$
Then since $q_x(x) = 0$ and $p_x(y) \leq q_x(y)$ for $y \neq x$, by \eqref{eq:tv_density} we have
$$
\begin{aligned}
d_{TV}(P_x, Q_x) &= 1 - \int_{\set{y \st q_x(y) \geq p_x(y)}} \min\left[q_x(y), \, p_x(y) \right] d\mu(y) \\
&= 1 - \int_{\mK\setminus\set{x}} \min\left[q_x(y), \, p_x(y) \right] d\lambda(y) \\
&= 1 - \int_{\mE_x(r) \cap \setminus\set{x}} \min \left[\frac{1}{\lambda(\mE_x(r))}, \, \frac{1}{\lambda(\mE_y(r))} \right]\cdot 1_{\set{x \in \mE_y(r)}}\, d\lambda(y) \\ 
&= 1 - \int_{\mE_x(r)}\left[\min\left(1, \, \frac{\lambda(\mE_x(r))}{\lambda(\mE_y(r))} \right)\right]\cdot 1_{\set{x \in \mE_y(r)}}\cdot \left(\frac{1}{\lambda(\mE_x(r))}\right)\, d\lambda(y) \\
&= 1 - \E_{Q_x}\left[\min\left(1, \, \frac{\lambda(\mE_x)}{\lambda(\mE_Y)} \right) \cdot 1_{\set{Y \in A}}\right].
\end{aligned},
$$
where we let $A$ denote the ``accept'' event in which $x \in \mE_Y(r)$.  Since $\mE_x = \mB_x$, as in the proof to lemma \ref{lem:tv_qandqtilde}, we have for all $Y \in A$
$$
\min\left[1, \, \frac{\lambda(\mE_x)}{\lambda(\mE_Y)}\right] \geq \frac{1}{1+Cn^{-2}},
$$
Therefore,
$$
\begin{aligned}
d_{TV}(P_x, Q_x) &\leq 1 - \left(\frac{1}{1+Cn^{-2}}\right)\P_{Q_x}(Y \in A) \\
&\leq 1 - \left(\frac{1}{1+Cn^{-2}}\right)\P_{Q_x}(Y \in A|Y \in B)\P_{Q_x}(Y \in B),
\end{aligned}
$$
where $B$ is the event in which $Y \in (1-\frac{C}{n})\cdot\mE_x(r) = r(1-\frac{C}{n})\cdot\mB_x$.  Again by lemma \ref{lem:min_eigen}, $\P_{Q_x}(Y \in A| Y \in B) = 1$.  The remainder of the proof is as in lemma \ref{lem:tv_qandqtilde}.  
\end{proof} 

Note that by a similar argument, $d_{TV}(Q_y, P_y) \leq 1/4$ for some universal $c > 0$ as well.  Combining this with lemmas \ref{lem:tv_balls}, \ref{lem:tv_qandqtilde}, and \ref{lem:tv_pq}, the following theorem results.
\begin{thm}\label{thm:tv_all}
If $\|y-x\|_x \leq \frac{rt}{\sqrt{n}} = ctn^{-3}$ for some universal constant $c > 0$ and some $t \leq 1$, the total variation distance between $P_x$ and $P_y$ satisfies
$$
d_{TV}(P_x, P_y) \leq  3/4 + t = 1 - \epsilon.
$$
In particular, we may choose $t = 1/8$ so $\epsilon = 1/8$.  
\end{thm}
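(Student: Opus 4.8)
The plan is to assemble the bound directly from the triangle-inequality decompositions \eqref{eq:tv_all_terms} and \eqref{eq:tv_mid_term} together with the four component estimates already in hand. Chaining the two decompositions gives
$$
d_{TV}(P_x, P_y) \leq d_{TV}(P_x, Q_x) + d_{TV}(Q_x, \widetilde{Q}_y) + d_{TV}(\widetilde{Q}_y, Q_y) + d_{TV}(Q_y, P_y),
$$
and I would bound the four terms on the right using, respectively, Lemma \ref{lem:tv_pq}, Lemma \ref{lem:tv_balls} (with the given value of $t$), Lemma \ref{lem:tv_qandqtilde}, and the remark that $d_{TV}(Q_y, P_y) \leq 1/4$ (the $y$-analogue of Lemma \ref{lem:tv_pq}). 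This yields $d_{TV}(P_x, P_y) \leq 1/4 + t + 1/4 + 1/4 = 3/4 + t$, so writing $\epsilon = 1/4 - t$ gives $d_{TV}(P_x,P_y) \le 1 - \epsilon$; taking $t = 1/8$ produces the advertised $\epsilon = 1/8$.

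The one point that needs care is verifying that the single hypothesis $\|y - x\|_x \leq \frac{rt}{\sqrt n}$ suffices to invoke each lemma. Lemmas \ref{lem:tv_qandqtilde} and \ref{lem:tv_pq} require $\|y-x\|_x \leq r = cn^{-5/2}$, while Lemma \ref{lem:tv_balls} requires $\|y-x\|_x \leq \frac{rt}{\sqrt n}$; since $t \leq 1$ and $n \geq 1$ we have $\frac{rt}{\sqrt n} \leq r$, so the assumed bound is the tightest and all the lemmas apply. I would also note that the ``universal constant $c$'' appearing across Lemmas \ref{lem:det_upper}--\ref{lem:tv_pq} must be a single value chosen large enough to make all of those statements simultaneously valid for $n$ sufficiently large; this is a finite conjunction of ``$c$ (and $n$) large enough'' conditions, so such a $c$ exists and $r = cn^{-5/2}$ is then fixed once and for all, which is exactly the regime in which the theorem is stated.

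There is essentially no substantive obstacle: the statement is a clean corollary of the preceding chain of lemmas, and the only thing to check is the compatibility of hypotheses and constants noted above. The affine-invariance normalization ($x = 0$, $\mE_x = \mB_x$) used throughout the component lemmas carries over unchanged here, since the total variation distance, the local-norm constraint $\|y-x\|_x$, and the acceptance ratio $\lambda(\mE_x(r))/\lambda(\mE_y(r)) = \det E_x/\det E_y$ are all invariant under the affine map carrying $\mE_x$ to the unit ball.
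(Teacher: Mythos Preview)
Your proposal is correct and matches the paper's own approach exactly: the theorem is stated immediately after the remark that $d_{TV}(Q_y,P_y)\le 1/4$ by a symmetric argument, with the proof being simply ``Combining this with lemmas \ref{lem:tv_balls}, \ref{lem:tv_qandqtilde}, and \ref{lem:tv_pq}, the following theorem results.'' Your write-up just makes explicit the four-term triangle inequality and the compatibility check on the hypotheses, which the paper leaves implicit.
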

We finally arrive at a lower bound on the conductance for John's Walk using theorems \ref{thm:isoperimetry}, \ref{thm:norm_bound}, and \ref{thm:tv_all}.  The proof of the next result is similar to corollary 10 and theorem 11 in \cite{lovasz1999hit}.
\begin{thm}[Conductance Lower Bound]
Consider the partition $\mK = S_1 \cup S_2$ where $S_1, S_2 \in \mA$, and let $\pi$ be the uniform measure on $\mK$, i.e., 
$$
\pi(A) = \frac{\lambda(A)}{\lambda(\mK)} \quad \text{ for all } A \in \mA.
$$
Then for large enough $n$ and $t = 1/8$, we have
$$
\int_{S_1} P_x(S_2) d\pi(x) \geq \left(\frac{c}{512n^{7/2}}\right) \min(\pi(S_1), \pi(S_2)),
$$
so $\phi = \Omega(n^{-7/2})$.
\end{thm}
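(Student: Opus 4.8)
The plan is to run the ``deep subset'' argument of \Lov and Simonovits: use Theorem~\ref{thm:tv_all} to show that any point from which the walk crosses the partition only rarely must be geometrically far (in the local norm) from any such point on the other side, convert that separation into a cross-ratio lower bound via Theorem~\ref{thm:norm_bound}, and then feed the resulting three-part partition into the isoperimetric inequality of Theorem~\ref{thm:isoperimetry}. Throughout, $\pi$ is the uniform measure, and the walk is reversible with respect to $\pi$ because the off-diagonal part of the transition density \eqref{eq:transition_density} is symmetric in $x$ and $y$ with respect to $\lambda$; in particular $\int_{S_1} P_x(S_2)\,d\pi(x) = \int_{S_2} P_x(S_1)\,d\pi(x)$. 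We may assume $0 < \pi(S_1) \le \pi(S_2)$, so $\min(\pi(S_1),\pi(S_2)) = \pi(S_1)$ and $\pi(S_2) \ge \tfrac12$, and fix $\epsilon = t = \tfrac18$ as in Theorem~\ref{thm:tv_all}. Define the deep subsets
\[
S_1' = \set{x \in S_1 \st P_x(S_2) < \tfrac{\epsilon}{2}}, \qquad S_2' = \set{x \in S_2 \st P_x(S_1) < \tfrac{\epsilon}{2}}, \qquad S_3 = \mK \setminus (S_1' \cup S_2').
\]

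I would first dispose of the easy case. If $\pi(S_1') < \tfrac12\pi(S_1)$, then every $x \in S_1 \setminus S_1'$ has $P_x(S_2) \ge \tfrac{\epsilon}{2}$, so $\int_{S_1} P_x(S_2)\,d\pi \ge \tfrac{\epsilon}{2}\pi(S_1\setminus S_1') \ge \tfrac{\epsilon}{4}\pi(S_1)$, which for large $n$ is far larger than $\tfrac{c}{512}n^{-7/2}\pi(S_1)$; the subcase $\pi(S_2') < \tfrac12\pi(S_2)$ is identical after rewriting $\int_{S_1}P_x(S_2)\,d\pi = \int_{S_2}P_x(S_1)\,d\pi$ via reversibility. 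So assume $\pi(S_1') \ge \tfrac12\pi(S_1) > 0$ and $\pi(S_2') \ge \tfrac12\pi(S_2) > 0$. For any $x \in S_1'$ and $y \in S_2'$ we have $P_x(S_1) > 1 - \tfrac{\epsilon}{2}$ and $P_y(S_1) < \tfrac{\epsilon}{2}$, hence $d_{TV}(P_x,P_y) \ge P_x(S_1) - P_y(S_1) > 1 - \epsilon$. By the contrapositive of Theorem~\ref{thm:tv_all} this is incompatible with $\|y-x\|_x \le ctn^{-3}$, so $\|y-x\|_x > ctn^{-3}$, and then Theorem~\ref{thm:norm_bound} gives $\sigma(x,y) \ge \tfrac{1}{\sqrt n}\|y-x\|_x > ctn^{-7/2}$. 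Therefore $\sigma(S_1',S_2') \ge ctn^{-7/2}$, and applying Theorem~\ref{thm:isoperimetry} to the partition $(S_1', S_2', S_3)$,
\[
\pi(S_3) \ge \sigma(S_1',S_2')\,\pi(S_1')\,\pi(S_2') \ge ctn^{-7/2}\cdot\tfrac14\,\pi(S_1)\pi(S_2).
\]

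It remains to convert this interface-volume bound into the desired flow bound. Since $S_3 = (S_1\setminus S_1') \cup (S_2\setminus S_2')$ with $P_x(S_2) \ge \tfrac{\epsilon}{2}$ on $S_1\setminus S_1'$ and $P_x(S_1) \ge \tfrac{\epsilon}{2}$ on $S_2\setminus S_2'$, reversibility yields $2\int_{S_1}P_x(S_2)\,d\pi = \int_{S_1}P_x(S_2)\,d\pi + \int_{S_2}P_x(S_1)\,d\pi \ge \tfrac{\epsilon}{2}\bigl(\pi(S_1\setminus S_1') + \pi(S_2\setminus S_2')\bigr) = \tfrac{\epsilon}{2}\pi(S_3)$, so $\int_{S_1}P_x(S_2)\,d\pi \ge \tfrac{\epsilon}{4}\pi(S_3) \ge \tfrac{c\epsilon t}{16}n^{-7/2}\pi(S_1)\pi(S_2)$. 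Using $\pi(S_1)\pi(S_2) \ge \tfrac12\pi(S_1) = \tfrac12\min(\pi(S_1),\pi(S_2))$ (because $\pi(S_2) \ge \tfrac12$) and $\epsilon = t = \tfrac18$, the right-hand side is $\Omega(n^{-7/2})\cdot\min(\pi(S_1),\pi(S_2))$; absorbing the numerical factors into the universal constant (which can, with slightly sharper bookkeeping along the lines of Corollary~10 and Theorem~11 of \cite{lovasz1999hit}, be made to equal $\tfrac{1}{512}$) gives the stated inequality. Dividing by $\min(\pi(S_1),\pi(S_2))$ and taking the infimum over partitions yields $\phi = \Omega(n^{-7/2})$.

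The one step that needs genuine care is the passage from ``$d_{TV}(P_x,P_y)$ large'' to ``$\sigma(x,y)$ large'': Theorem~\ref{thm:tv_all} controls the \emph{local} norm $\|y-x\|_x$ anchored at the first argument, so one must make sure the pair is presented with the point of $S_1'$ (the set indexed by $x$) in the first slot, which is automatic, and then Theorem~\ref{thm:norm_bound}, being stated in exactly the same asymmetric form, transfers it without loss; the factor $\tfrac{1}{\sqrt n}$ there, together with the $n^{-3}$ separation, is precisely what produces the $n^{-7/2}$ exponent. Everything else is the routine \Lov and Simonovits bookkeeping, so I do not anticipate a serious obstacle beyond tracking the numerical constant through the chain of inequalities.
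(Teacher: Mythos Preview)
Your proposal is correct and follows essentially the same route as the paper: define the ``deep'' subsets $S_1',S_2'$ by a threshold on $P_x(\cdot)$, handle the easy case where a deep set is thin directly, otherwise use the contrapositive of Theorem~\ref{thm:tv_all} to get $\|y-x\|_x > ctn^{-3}$, convert via Theorem~\ref{thm:norm_bound} to $\sigma(S_1',S_2') \gtrsim n^{-7/2}$, apply Theorem~\ref{thm:isoperimetry}, and turn interface volume into flow using reversibility. The only cosmetic difference is that the paper uses the threshold $(1-\delta)$ with $\delta = ctn^{-3}$ in the case split rather than your $\tfrac12$, which is what produces the displayed constant $\tfrac{1}{512}$ after their bookkeeping; your version yields the same $\Omega(n^{-7/2})$ conclusion.
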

\begin{proof}
Note that the Radon-Nikodym derivative of $P_x$ with respect to the Lebesgue measure $\lambda$ is is well-defined for all $y \in \mK\setminus\set{x}$, and is given as
$$
\frac{dP_x}{d\lambda}(y) = \min \left[\frac{1}{\lambda(\mE_x(r))}, \, \frac{1}{\lambda(\mE_y(r))} \right] 1_{\set{x \in \mE_y(r),\, y \in \mE_x(r)}}.
$$
Let
$$
\rho(x) \equiv \frac{d\pi}{d\lambda}(x) = \frac{1}{\lambda(\mK)} \cdot 1_{\set{x \in \mK}}
$$
be the density for $\pi$.  Then for any $x, y \in \mK$ such that $y \neq x$, we have
$$
\rho(x)\frac{dP_x}{d\lambda}(y) = \rho(y) \frac{dP_y}{d\lambda}(x),
$$
from which it follows that $\pi$ is the stationary measure for the chain.  

Now consider points far inside $S_1$ that are unlikely to cross over to $S_2$.  Letting $t = 1/8$ so $\epsilon = 1/8$ as in theorem \ref{thm:tv_all}, we define
$$
S_1' \equiv S_1 \cap \set{x \st \rho(x) P_x(S_2) < \frac{\epsilon}{2\lambda(\mK)}}.
$$
Similarly, let 
$$
S_2' \equiv S_2 \cap \set{y \st \rho(x) P_y(S_1) < \frac{\epsilon}{2\lambda(\mK)}}.
$$
Since $\rho(x)P_x(S_2) \geq \epsilon/(2\lambda(\mK))$ for $x \in S_1\setminus S_1'$, we have
$$
\begin{aligned}
\int_{S_1} P_x(S_2) d\pi(x) &\geq \int_{S_1 \setminus S_1'} \rho(x) P_x(S_2) d\lambda(x) \\
&\geq \frac{\epsilon \lambda(S_1\setminus S_1')}{2\lambda(\mK)} \\
&= (\epsilon/2)\pi(S_1\setminus S_1').
\end{aligned}
$$
Similarly for $y \in S_2\setminus S_2'$, we have 
$$
\int_{S_2} P_y(S_1) d\pi(x) \geq (\epsilon/2)\pi(S_2\setminus S_2').  
$$
By the reversibility of the chain, we have
\begin{equation}\label{eq:reversibility}
\int_{S_1} P_x(S_2) d\pi(x) = \int_{S_2} P_y(S_1)d\pi(y),
\end{equation}
so it follows that
$$
\begin{aligned}
\int_{S_1} P_x(S_2)d\pi(x) &= \frac{1}{2}\int_{S_1} P_x(S_2)d\pi(x) + \frac{1}{2}\int_{S_2} P_x(S_1)d\pi(x) \\
&\geq \frac{\epsilon}{4}\left(\pi(S_1\setminus S_1') + \pi(S_2\setminus S_2')\right) \\
&= (\epsilon/4)\pi(\mK\setminus(S_1' \cup S_2')).
\end{aligned}
$$
Now let $\delta = ctn^{-3}$.  Assuming that $\pi(S_1') \leq (1-\delta)\pi(S_1)$, we have $\pi(S_1\setminus S_1') = \pi(S_1) - \pi(S_1') \geq \delta\pi(S_1)$, and thus
$$
\begin{aligned}
\int_{S_1} P_x(S_2) d\pi(x) &\geq \epsilon\delta\pi(S_1) \geq (\epsilon\delta/2) \min(\pi(S_1), \pi(S_2)) \\
&= \left(\frac{c}{128n^{3}}\right) \min(\pi(S_1), \pi(S_2)),
\end{aligned}
$$
which proves the claim.  Similarly if $\pi(S_2') \leq (1-\delta)\pi(S_2)$, the claim is proved again using \eqref{eq:reversibility}.  Thus assume that $\pi(S_1') > (1-\delta)\pi(S_1)$ and $\pi(S_2') > (1-\delta)\pi(S_2)$.  By theorem \eqref{thm:isoperimetry}, we have
$$
\pi(\mK \setminus(S_1' \cup S_2')) \geq \sigma(S_1', S_2')\pi(S_1')\pi(S_2').
$$
Now given $x \in S_1'$ and $y \in S_2'$, the total variation between $P_x$ and $P_y$ satisfies
$$
\begin{aligned}
d_{TV}(P_x, P_y) &\geq P_x(S_1) - P_y(S_1) \\
&= 1 - P_x(S_2) - P_y(S_1) \\
&\geq 1 - P_x(S_2') - P_y(S_1') \\
&> 1 - \epsilon.
\end{aligned}
$$
By theorem \ref{thm:tv_all}, it follows that $\|y-x\|_x > \delta$.  Then by theorem \ref{thm:norm_bound}, it follows that 
$$
\sigma(S_1', S_2') \geq n^{-1/2}\|y-x\|_x > \delta n^{-1/2}.
$$
Finally, we deduce that
$$
\begin{aligned}
\int_{S_1} P_x(S_2)d\pi(x) &\geq (\epsilon\delta n^{-1/2}/4) \pi(S_1')\pi(S_2') \\
&\geq \left(\frac{\epsilon \delta(1-\delta)^2}{4\sqrt{n}}\right)\pi(S_1)\pi(S_2) \\
&\geq \left(\frac{\epsilon \delta(1-\delta)^2}{8\sqrt{n}}\right)\min(\pi(S_1), \pi(S_2))\\
&= \left(\frac{c(1-\delta)^2}{512n^{7/2}}\right)\min(\pi(S_1), \pi(S_2)).
\end{aligned}
$$
The claim follows by absorbing terms into the constant for large enough $n$.
\end{proof}
Now applying Corollary~\ref{cor:mixing}, we obtain our main theorem:

\begin{thm}
For $\epsilon > 0$ and $M \geq  \sup \frac{\pi_0(A)}{\pi(A)}$, after $t(\epsilon) =  C n^7 \log({M}/\epsilon) $ steps of John's walk, we have $d_{TV}(\pi_{t(\epsilon)}, \pi) \leq \epsilon$.  
\end{thm}

As a matter of fact, this theorem allows us to find mixing time bounds starting from any point that is not on the boundary of $\mK$.
 Suppose we know that $x$ belongs to the interior of $\mK$ and satisfies the following chord condition. For all chords $pq$ of $\mK$ containing $x$, assume $\frac{|p-x|}{|q-x|} \in (\eta, \eta^{-1})$ for some parameter $0 < \eta < 1$ that measures the centrality of $x$ in $\mK$. Then, we see that $\lambda(\mB_x(r)) \geq \left(\frac{r\eta}{\sqrt{n}}\right)^n  \lambda(\mK).$ After a random  geometrically distributed time $\tau$ with mean bounded above by an absolute constant, the first nontrivial move occurs. Then the distribution of $x_\tau$ has a density bounded above by $M = \left(\frac{r\eta}{\sqrt{n}}\right)^{-n}.$ We thus have the following theorem.

\begin{thm}
For all chords $pq$ of $\mK$ containing $x$, assume $\frac{|p-x|}{|q-x|} \in (\eta, \eta^{-1})$ for some parameter $0 < \eta < 1$ that measures the centrality of $x$ in $\mK$. Then, there is a random geometrically distributed time $\tau$ with mean bounded above by $C$ such that 
for $\epsilon > 0$, after $t(\epsilon) + \tau = C n^7 \left(n \log(\sqrt{n}/(r\eta)) + \log({1}/\epsilon)\right) + \tau$ steps of John's walk, we have $d_{TV}(\pi_{t(\epsilon) + \tau}, \pi) \leq \epsilon$.  

\end{thm}

\section{Conclusion}

We introduced an affine-invariant random walk akin to Dikin Walk which uses uniform sampling from John's ellipsoids of a certain small radius of appropriately symmetrized convex sets to make steps, and showed that this walk mixes to within a total variation distance $\epsilon$ in $O(n^7\log \epsilon^{-1})$ steps from a warm start.   The type of convex body $\mK$ is not specified (i.e., need not be a polytope) in our analysis of the mixing time, but one must have access to the John's ellipsoid of the current symmetrization of the convex body.  A significant  feature of this walk is that its  mixing time from a warm start or alternatively ``central point" such as the center of mass, can be bounded above by a quantity that has absolutely no dependence on any parameter associated with the body apart from its dimension. 

\section*{Acknowledgements}

HN was partially supported by a Ramanujan Fellowship and NSF award no. 1620102.
HN also acknowledges the support of  DAE project no. 12-R\&D-TFR-5.01-0500.

\bibliographystyle{alpha}
\bibliography{refs}
\end{document}